\definecolor{Gred}{RGB}{219, 50, 54}
\definecolor{Ggreen}{RGB}{60, 186, 84}
\definecolor{Gblue}{RGB}{72, 133, 237}
\definecolor{Gyellow}{RGB}{247, 178, 16}
\definecolor{ToCgreen}{RGB}{0, 128, 0}
\definecolor{myGold}{RGB}{231,141,20}
\definecolor{myBlue}{rgb}{0.19,0.41,.65}
\definecolor{myPurple}{RGB}{175,0,124}
\newtheorem{theorem}{Theorem}
\newtheorem{lemma}[theorem]{Lemma}
\newtheorem{assumption}[theorem]{Assumption}
\crefname{assumption}{Assumption}{Assumptions}
\newtheorem{claim}[theorem]{Claim}
\crefname{claim}{Claim}{Claims}
\theoremstyle{definition}
\newtheorem{definition}[theorem]{Definition}
\def\eqref#1{equation~\ref{#1}}
\def\1{\bm{1}}
\def\eps{{\varepsilon}}
\newcommand{\prn}[1]{\left ( #1 \right )}
\newcommand{\sq}[1]{\left [ #1 \right ]}
\DeclareMathAlphabet{\mathsfit}{\encodingdefault}{\sfdefault}{m}{sl}
\SetMathAlphabet{\mathsfit}{bold}{\encodingdefault}{\sfdefault}{bx}{n}
\def\gD{{\mathcal{D}}}
\def\gG{{\mathcal{G}}}
\def\gI{{\mathcal{I}}}
\def\gK{{\mathcal{K}}}
\def\gL{{\mathcal{L}}}
\def\gM{{\mathcal{M}}}
\def\gP{{\mathcal{P}}}
\def\gX{{\mathcal{X}}}
\def\gY{{\mathcal{Y}}}
\newcommand{\hty}{\hat{y}}
\newcommand{\sfL}{\mathsf{L}}
\newcommand{\sfS}{\mathsf{S}}
\newcommand{\sfU}{\mathsf{U}}
\newcommand{\Lap}{\mathrm{Lap}} 
\newcommand{\DLap}{\mathrm{DLap}} 
\DeclareMathOperator*{\E}{\mathbb{E}}
\newcommand{\R}{\mathbb{R}}
\newcommand{\citep}{\cite}
\newcommand{\citet}{\cite}
\setlist[itemize]{label=$\triangleright$,leftmargin=6mm,itemsep=0pt,topsep=0pt}
\title{Optimal Unbiased Randomizers for\\
Regression with Label Differential Privacy}
\author{
Ashwinkumar Badanidiyuru \\
{\small Google} \\
{\small Mountain View, CA}
\And 
Badih Ghazi \\
{\small Google Research} \\
{\small Mountain View, CA}
\And 
Pritish Kamath \\
{\small Google Research} \\
{\small Mountain View, CA}
\And Ravi Kumar \\
{\small Google Research} \\
{\small Mountain View, CA}
\And Ethan Leeman \\
{\small Google} \\
{\small Cambridge, MA}
\And Pasin Manurangsi \\
{\small Google Research} \\
{\small Bangkok, Thailand}
\And Avinash V Varadarajan \\
{\small Google} \\
{\small Mountain View, CA}
\And Chiyuan Zhang \\
{\small Google Research} \\
{\small Mountain View, CA}
}
\newcommand{\RRonBins}{\mathsf{RR}\text{-}\mathsf{on}\text{-}\mathsf{Bins}}
\newcommand{\poiloss}{\ell_{\mathrm{Poi}}}
\newcommand{\sqloss}{\ell_{\mathrm{sq}}}
\newcommand{\mlap}{\gM^{\mathrm{Lap}}}
\newcommand{\hy}{\hat{y}}
\newcommand{\gYhat}{\hat{\gY}}
\newcommand{\datran}{\mathsf{LabelRandomizer}}
\newcommand{\computeopt}{\mathsf{ComputeOptUnbiasedRand}}
\newcommand{\feasibleout}{\mathsf{FeasibleOutputSet}}
\newcommand{\nc}{\newcommand}
\nc{\RRtopk}{\texttt{RRTop-$k$}\xspace}
\nc{\RRShort}{\texttt{RR}\xspace}
\newcommand{\DP}{$\mathsf{DP}$\xspace}
\newcommand{\LabelDP}{$\mathsf{LabelDP}$\xspace}
\newcommand{\dbRR}{\mathsf{dbRR}}
\newcommand{\URR}{\mathsf{URR}}
\newcommand{\ty}{\tilde{y}}
\newcommand{\nll}{g}
\newcommand{\NLL}{\gG}
\begin{document}

\maketitle
\setcounter{page}{1}

\begin{abstract}

We propose a new family of label randomizers for  training {\em regression} models under the constraint of label differential privacy (DP). In particular, we leverage the trade-offs between bias and variance to construct better label randomizers depending on a privately estimated prior distribution over the labels. We demonstrate that these randomizers achieve state-of-the-art privacy-utility trade-offs on several datasets, highlighting the importance of reducing bias when training neural networks with label DP. We also provide theoretical results shedding light on the structural properties of the optimal unbiased randomizers.
\end{abstract}

\section{Introduction}\label{sec:intro}

Differential privacy (DP) \citep{DworkMNS06} has gained significant importance in recent years as a mathematically sound metric for rigorously quantifying the potential disclosure of personal user information through ML models \citet{abadi2016deep, tf-privacy, pytorch-privacy}. DP guarantees that the model generated by the training process remains statistically indistinguishable, even if the data contributed by any individual user to the training dataset is modified arbitrarily.

In certain scenarios, the training {\em features} of a specific example are already accessible to potential adversaries, while only the training {\em label} is considered sensitive. 
For instance, within the context of digital advertising, it is typical to train conversion models that aim to predict whether a user, who interacts with an advertisement on a publisher's website, will ultimately make a purchase of the advertised item on the advertiser's website. The conversion label, which indicates whether the user completed the purchase or not, is not initially known to the publisher website and is considered sensitive information that spans multiple websites.%
\footnote{Given the announcement of various web browsers and mobile platforms regarding the deprecation of third-party cookies, which were previously employed to track user behavior across different websites, the need for training models that protect the privacy of labels has become increasingly important.}
This particular setting, referred to as label DP, was initially studied in the work of~\citet{chaudhuri2011sample}.  Subsequently, it has garnered attention in various recent works, such as 
\citep{ghazi2021deep,malek2021antipodes, esfandiari2022label}\footnote{as well as in proposals for industry-wide web and mobile standards \cite{github_issue}.}.

For regression objectives (such as the squared loss and the Poisson log loss), the state-of-the-art is given by the prior work of \citet{GhaziKKLMVZ2022}. Their algorithm works in the so-called {\em features-oblivious} setting (Figure~\ref{fig:feature_oblivious_label_DP}), which consists of a {\em features party} that has access to all the features, and a {\em labels party} that has access to all the labels. The labels party applies a mechanism that is DP with respect to the labels in order to produce the message $M(\cdot)$ that it sends to the features party, who then uses the features along with the incoming message in order to train the model (which as a consequence would also satisfy label DP). It proceeds by using part of the privacy budget in order to estimate a prior over the labels, and then constructs a randomizer optimizing the ``noisy label loss'' (see \Cref{def:noisy-label-loss}) with respect to this prior.

In this work, we introduce a novel mechanism with bias-limiting constraints, motivated by the theory of bias-variance trade-offs. We show that while these constraints lead to significantly higher noisy label loss, the models trained on the privatized labels performs surprisingly well, achieving the state-of-the-art utility-privacy trade-off on multiple real world datasets.

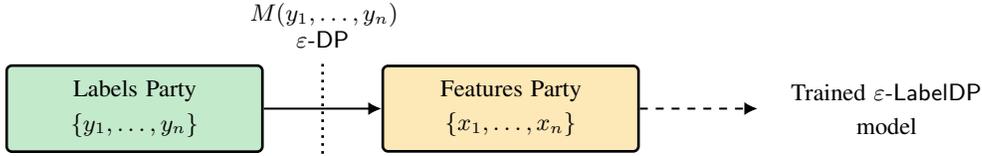
\begin{figure}[t]
\centering
\begin{tikzpicture}\small
\tikzset{
    box/.style = {rectangle, line width=1pt, rounded corners=2pt, text width=3cm, inner sep=2mm, align=center}
}
\node[box, draw=black, fill=Ggreen!30] (labels) at (0, 0) {
    Labels Party \\[1mm] $\{y_1, \ldots, y_n\}$
};

\node[box, draw=black, fill=Gyellow!30] (features) at (5, 0) {
    Features Party \\[1mm] $\{x_1, \ldots, x_n\}$
};

\node[box, draw=none] (model) at (10, 0) {
    Trained $\eps$-\LabelDP \\[1mm] model
};

\path[-{Latex[length=5pt,width=5pt]}, line width=0.75pt]
(labels) edge node[midway] (mid) {} (features)
(features) edge[dashed] (model);

\node[box, draw=none, above=6mm] (M) at (mid) {$M(y_1, \ldots, y_n)$ \\ $\eps$-\DP};
\draw (2.5, -0.6) edge[line width=1pt, dotted] (M);
\end{tikzpicture}
\caption{Feature-oblivious label DP model training studied in \cite{GhaziKKLMVZ2022}.}
\label{fig:feature_oblivious_label_DP}
\end{figure}

\paragraph{Organization.}

\Cref{sec:prelims} provides some background definitions related to (label-)DP and learning. In \Cref{sec:mechanisms}, we present the new label DP randomizers obtained by imposing unbiasedness constraints. \Cref{sec:experiments} provides an experimental evaluation demonstrating that our method achieves state-of-the-art privacy-utility trade-offs across multiple datasets. We provide some related work in \Cref{sec:related_work}. We conclude with some future directions in \Cref{sec:conc_future_dir}. All missing proofs along with additional related work, as well as details on the experimental evaluation are provided in the Appendix.

\section{Preliminaries}\label{sec:prelims}

Let $\gD$ be an unknown distribution on $\gX \times \gY$.  We consider the regression setting where $\gY \subseteq \R$; let $\gP$ denote the marginal distribution of $\gD$ on $\gY$.  In supervised learning, we have a set $\{ (x, y) \}$ of examples drawn from $\gD$. The goal is to learn a predictor $f_{\theta}$ to minimize the expected loss $\gL_\gD(f_{\theta}) := \E_{(x,y)\sim \gD} \ell(f_{\theta}(x), y)$, for some loss function $\ell : \R \times \gY \to \R$.  Some commonly-used loss functions used in regression tasks are
the {\em Poisson log loss} $\poiloss(\tilde{y}, y) := \tilde{y} - y \cdot \log(\tilde{y})$ and
the {\em squared loss} $\sqloss(\tilde{y}, y) := \frac12 (\tilde{y} - y)^2$. 

We recall the definition of DP; for more background, see the book of Dwork and Roth~\citet{dwork2014algorithmic}.

\begin{definition}[DP; \citet{DworkMNS06}]
Let $\eps \in \R_{> 0}$. A randomized algorithm $\mathcal{A}$  is said to be \emph{$\eps$-differentially private} (denoted \emph{$\eps$-\DP}) if for any two ``adjacent'' input datasets $X$ and $X'$, and any measurable subset $S$ of outputs of $\mathcal{A}$, it holds that $\Pr[\mathcal{A}(X) \in S] \le e^{\eps} \cdot \Pr[\mathcal{A}(X') \in S]$.
\end{definition}
In the context of supervised learning, an algorithm produces a model as its output, while the labeled training set serves as the input. Two inputs are considered ``adjacent'' if they differ on a single training example. This concept of adjacency is intended to safeguard both the features and the label of any individual example. However, in certain scenarios, protecting the features may either be unnecessary or infeasible, and the focus is solely on protecting the labels. This leads to the following definition:

\begin{definition}[Label DP; \citet{chaudhuri2011sample}]
A randomized training algorithm $\mathcal{A}$ is said to be \emph{$\eps$-label differentially private} (denoted \emph{$\eps$-\LabelDP}) if it is $\eps$-\DP when two input datasets are ``adjacent'' if they differ on the label of a single training example.
\end{definition}

We recall the notion of feature-oblivious label DP~\cite{GhaziKKLMVZ2022}. 
In this scenario, there are two parties: the {\em features party} and the {\em labels party}. The features party has the sequence $(x_i)_{i=1}^n$ of all feature vectors across the $n$ users; the labels party has the sequence $(y_i)_{i=1}^n$ of the corresponding labels. The labels party sends a single (possibly randomized) message $M(y_1, \dots , y_n)$ to the features party. Based on its input and the received message, the features party  generates an ML model as its output.

\begin{definition}[Feature-Oblivious Label DP \cite{GhaziKKLMVZ2022}]\label{def:2p_labels_to_features}
In the above scenario, the output  of the features party satisfies \emph{feature-oblivious $\eps$-\LabelDP} if the message $M(y_1, \dots , y_n)$ is $\eps$-\DP where two inputs are considered ``adjacent'' if they differ on a single $y_i$.
\end{definition}

\section{Label-DP Randomizers}\label{sec:mechanisms}

A standard recipe for learning with label DP is to: (i) compute noisy labels using a local DP randomizer $\gM$ and
(ii) use a learning algorithm on the dataset with noisy labels. Many of the baseline algorithms that we consider follow this recipe, through different ways of generating noisy labels, such as, (a) (continuous/discrete) Laplace mechanism, (b) (continuous/discrete) staircase mechanism, etc. (see \Cref{apx:dp_mech_defs} for formal definitions). Prior work by \cite{GhaziKKLMVZ2022} argues that intuitively such a learning algorithm will be most effective when the noisy label ``mostly agrees'' with the true label.
This was formalized in the goal of minimizing the {\em noisy label loss}, which is the expected loss between the true label and the noisy label, for the true label drawn from a prior distribution, for some loss function $\ell$.
\begin{definition}[Noisy label loss]\label{def:noisy-label-loss}
For a loss function $\nll : \R \times \gY \to \R$, the {\em noisy label loss} of a randomizer $\gM$ with respect to prior $\gP$ is $\NLL(\gM; \gP) := \E_{y \sim \gP, \hy \sim \gM(y)} \nll(\hat{y}, y)$, where $\hy$ is the noisy label generated by $\gM$ on input $y$.
\end{definition}
This was motivated by the triangle inequality \cite[Eq. (1)]{GhaziKKLMVZ2022}, namely,
\begin{align}
\E\limits_{(x,y)\sim \gD} \ell(f_{\theta}(x), y) ~\le~ \E\limits_{\substack{y \sim \gP\\ \hty \sim \gM(y)}} \ell(\hty, y) ~+~ \E\limits_{\substack{(x,y) \sim \gD\\ \hty \sim \gM(y)}} \ell(f_{\theta}(x), \hty)\,. \label{eq:triangle}
\end{align}
The work of \cite{GhaziKKLMVZ2022} focused on optimizing noisy label loss; they showed that the optimal randomizer takes the form of ``Randomized-Response on Bins'' (see \Cref{apx:dp_mech_defs} for more details). However, if we notice carefully, the noisy label loss does not depend on the number of examples, so the RHS of (\ref{eq:triangle}) does not go to zero as the number of examples goes to infinity, even though we would like the excess population loss of the learnt predictor to asymptotically go to zero. In this paper, we provide an alternative insight into the goal of minimizing the noisy label loss.

For any distribution $\gD$ over $\gX \times \gY$, and any label randomizer $\gM$ mapping $\gY$ to $\gYhat \subseteq \R$, let $\gD_{\gM}$ be the distribution over $\gX \times \gYhat$ sampled as $(x, \gM(y))$ for $(x,y)\sim\gD$. The \emph{Bayes optimal predictor} for any distribution $\gD$ over $\gX \times \gY$ w.r.t. loss $\ell$, is a predictor $f^*_{\gD} : \gX \to \R$ that minimizes $\gL_{\gD}(f)$, which roughly corresponds to the best predictor we could hope to learn with unbounded number of samples from $\gD$. We show that, when training with a loss from a broad family that includes $\sqloss$ and $\poiloss$, the Bayes optimal predictor is preserved after applying the label randomizer iff $\E [\gM(y)] = y$ holds for all $y \in \gY$ (i.e., the randomizer is \emph{unbiased}).
\begin{restatable}{theorem}{bayesopt}\label{prop:bayes}
Suppose $\ell$ is a loss function such that for all distributions $\gP$ over $\gY$, the minimizer $\hy_* := \min_{\hy \in \R} \E_{y \sim \gP} \ell(\hy, y)$ exists and is given as $\hy_* = \E_{y \sim \gP} [y]$.
Then the following are equivalent:
\begin{itemize}[nosep]
\item $\E [\gM(y)] = y$ holds for all $y \in \gY$,
\item For all distributions $\gD$ over $\gX \times \gY$, it holds that $f^*_{\gD} = f^*_{\gD_\gM}$.
\end{itemize}
\end{restatable}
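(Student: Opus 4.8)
The plan is to use the hypothesis on $\ell$ to identify \emph{both} Bayes optimal predictors with conditional means, after which the equivalence essentially reads off. Throughout I treat the Bayes optimal predictor of a distribution as pinned down (on the support of its feature marginal) by the hypothesis.

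\textbf{Step 1 (the Bayes optimal predictor is the conditional mean).} For any distribution $\gD$ over $\gX \times \gY$, write $\gL_\gD(f) = \E_{(x,y)\sim\gD}[\ell(f(x),y)] = \E_{x}\big[\, \E_\gD[\ell(f(x),y)\mid x]\,\big]$. Since the value $f(x)$ may be chosen independently for each $x$, minimizing $\gL_\gD$ reduces to minimizing $\hy \mapsto \E_\gD[\ell(\hy, y)\mid x]$ over $\hy\in\R$ for each fixed $x$. The conditional law of $y$ given $x$ is a distribution over $\gY$, so the hypothesis applies: this minimizer exists and equals $\E_\gD[y\mid x]$. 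Hence $f^*_\gD(x) = \E_\gD[y\mid x]$ on the support of the $\gX$-marginal of $\gD$. Applying the identical reasoning to $\gD_\gM$, a distribution on $\gX\times\gYhat \subseteq \gX \times \R$, gives $f^*_{\gD_\gM}(x) = \E_{\gD_\gM}[\hy\mid x]$.

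\textbf{Step 2 (unwinding $\gD_\gM$).} By definition of $\gD_\gM$, conditioned on a feature value $x$ the noisy label is generated by first drawing $y$ from the conditional law of $\gD$ given $x$ and then drawing $\hy\sim\gM(y)$; the tower rule gives $\E_{\gD_\gM}[\hy\mid x] = \E_\gD\big[\, \E[\gM(y)] \,\big|\, x\,\big]$. Combining with Step 1, $f^*_{\gD}(x) = \E_\gD[y\mid x]$ while $f^*_{\gD_\gM}(x) = \E_\gD\big[\E[\gM(y)]\,\big|\,x\big]$ for every $x$ in the relevant support.

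\textbf{Step 3 (the two directions).} If $\gM$ is unbiased then $\E[\gM(y)] = y$ for every $y\in\gY$, so the two expressions in Step 2 coincide for every $x$ and every $\gD$, whence $f^*_\gD = f^*_{\gD_\gM}$. For the converse, assume $f^*_\gD = f^*_{\gD_\gM}$ for all $\gD$; fix $y_0\in\gY$ and an arbitrary $x_0\in\gX$, and let $\gD$ be the point mass on $(x_0,y_0)$. Its $\gX$-marginal is supported on the single point $x_0$, where $f^*_\gD(x_0) = y_0$ and $f^*_{\gD_\gM}(x_0) = \E[\gM(y_0)]$; equality forces $\E[\gM(y_0)] = y_0$, and since $y_0\in\gY$ was arbitrary, $\gM$ is unbiased.

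\textbf{Main obstacle.} The argument is structurally short, and the real care is measure-theoretic bookkeeping: one should assume $\gM(y)$ is integrable so the conditional means are defined, verify that $x\mapsto\E_\gD[y\mid x]$ is an admissible (measurable) predictor so it genuinely attains the infimum of $\gL_\gD$, and remember that ``$f^*_\gD = f^*_{\gD_\gM}$'' only constrains the predictors on the support of the feature marginal — which is exactly why degenerate distributions, whose feature marginal is a single atom, are the right test instances in Step 3. It also matters that the hypothesis supplies not merely existence but the \emph{value} of the minimizer, so identifying $f^*$ with the conditional mean is unambiguous (up to the usual almost-everywhere equivalence, which is vacuous for point masses).
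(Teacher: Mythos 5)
Your proposal is correct and follows essentially the same route as the paper's proof: identify both Bayes optimal predictors with conditional means via the hypothesis on $\ell$, push the expectation through the randomizer using feature-obliviousness (the tower rule), and for the converse use a distribution whose conditional label law at some $x_0$ is a point mass on the offending label. The paper's appendix proof carries out exactly this chain of equalities, so no further comparison is needed.
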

The main observation is that $f^*_{\gD}(x_0) = \E[y \mid x = x_0]$ with probability $1$ over $x_0$, and similarly, $f^*_{\gD_\gM}(x_0) = \E[\gM(y) \mid x = x_0]$ with probability $1$ over $x_0$. We defer the full proof to \Cref{apx:bayes}, but for now we demonstrate a simple example where a predictor learned using noisy labels produced by the optimal $\RRonBins$ randomizer can in fact be sub-optimal. Let $\gX = \{\texttt{a}, \texttt{b}\}$ and $\gY = \{0, 1, 2\}$ and the distribution $\gD$ be defined in \Cref{tab:toy-example}.
\begin{wraptable}{l}{37mm}
\begin{center}
\vspace{-7pt}
\renewcommand{\arraystretch}{1.1}
\begin{tabular}{|c|ccc|}
\hline
& 0 & 1 & 2 \\
\hline
\texttt{a} & 0.35 & 0.1 & 0.05  \\
\texttt{b} & 0.25 & 0.15 & 0.1 \\
\hline
\end{tabular}
\caption{Example $\gD$}
\label{tab:toy-example}
\vspace{-15pt}
\end{center}
\end{wraptable}
The Bayes optimal predictor for $\gD$ w.r.t. $\sqloss$ (or even $\poiloss$) is given as $f^*_{\gD}(\texttt{a}) = 0.4$ and $f^*_{\gD}(\texttt{b}) = 0.7$. On the other hand, the optimal $\gM = \RRonBins^{\Phi}_{\eps}$ randomizer for $\eps=0.5$ (see \Cref{apx:dp_mech_defs} for notation), corresponds to the map $\Phi(0) \approx 0.396$ and $\Phi(1) = \Phi(2) \approx 0.720$. The Bayes optimal predictor for $\gD_{\gM}$ is given as $f^*_{\gD_\gM}(\texttt{a}) = 0.542$ and $f^*_{\gD_\gM}(\texttt{b}) = 0.558$. The squared loss of these predictors are given as $\gL_{\gD}(f^*_{\gD}) = 2.625$ and $\gL_{\gD}(f^*_{\gD_{\gM}}) = 2.726$, the latter being sub-optimal. Thus, preserving the Bayes optimal predictor is a desirable property of a label randomizer, as any learning algorithm that converges to the optimal predictor can also be trained on the noisy labels and will approach the predictor for the original distribution.

Additionally, we can relate the property $\E [\gM(y)] = y$ to the unbiasedness of the gradients obtained in SGD. First, we observe that the gradient with respect to the parameters for $\sqloss$ and $\poiloss$ are affine in $y$, namely,
\[
\nabla_{\theta} \sqloss(f_{\theta}(x), y) = (f_\theta(x) - y) \cdot \nabla_\theta f_\theta(x)\,,
\qquad
\nabla_{\theta} \poiloss(f_{\theta}(x), y) = (1 - y / f_{\theta}(x)) \cdot \nabla_\theta f_\theta(x).
\]
Thus, the error in the gradient estimate when using the noisy label $\hy$ instead of $y$ is given as
\begin{align*}
\nabla_{\theta} \sqloss(f_{\theta}(x), \hy) - \nabla_{\theta} \sqloss(f_{\theta}(x), y)
&\textstyle~=~ (y - \hy) \cdot \nabla_\theta f_\theta(x),\\ %
\nabla_{\theta} \poiloss(f_{\theta}(x), \hy) - \nabla_{\theta} \poiloss(f_{\theta}(x), y)
&\textstyle~=~ (y - \hy) \cdot \frac{\nabla_\theta f_\theta(x)}{f_\theta(x)}.
\end{align*}
Let $S = \{(x_1, y_1), \ldots, (x_b, y_b)\}$ be a mini-batch of examples and let $\hat{S} = \{(x_1, \hy_1), \ldots, (x_b, \hy_b)\}$ be the mini-batch with noisy labels as returned by the randomizer $\gM$. Also consider the dataset with expected noisy labels $\tilde{S} = \{(x_1, \ty_1), \ldots, (x_b, \tilde{y}_b)\}$, where $\tilde{y}_i = \E_{\hy \sim \gM(y)} \hy$.
The difference between a mini-batch gradient w.r.t. noisy labels and the gradient of the population loss decomposes for $\sqloss$ as follows (similar decomposition holds for $\poiloss$):
\begin{align}
&\nabla_\theta \gL_{\hat{S}}(f_\theta) - \nabla_\theta \gL_{\gD}(f_\theta) \nonumber\\
&~=~ \nabla_\theta \gL_S(f_\theta) - \nabla_\theta \gL_{\gD}(f_\theta) ~+~
\nabla_\theta \gL_{\tilde{S}}(f_\theta) - \nabla_\theta \gL_S(f_\theta) ~+~
\nabla_\theta \gL_{\hat{S}}(f_\theta) - \nabla_\theta \gL_{\tilde{S}}(f_\theta) \label{eq:bias-variance-decomposition}\\
&~=~ \underbrace{\nabla_\theta \gL_S(f_\theta) - \nabla_\theta \gL_{\gD}(f_\theta)}_{(a)}
~+~ \underbrace{\E_{(x,y) \in S} (y - \E\hy) \cdot \nabla_\theta f_\theta(x)}_{(b)}
~+~ \underbrace{\E_{(x,y) \in S} (\E\hy - \hy) \cdot \nabla_\theta f_\theta(x)}_{(c)}. \nonumber
\end{align}
The term $(a)$ is the difference between the mini-batch gradient and the population gradient, the inherent stochasticity in mini-batch SGD, which has zero bias. The terms $(b)$ and $(c)$ form precisely the bias-variance decomposition for the additional stochasticity in the gradient introduced by the label noise, with the term $(c)$ having zero bias. In the case of stochastic convex optimization, it is well known that with access to biased gradients, it is impossible to achieve vanishingly small excess loss; whereas, with access to gradients with zero bias and any finite variance, it is possible to achieve an arbitrarily small excess loss using sufficient number of steps of stochastic gradient descent (see \Cref{apx:biased-sgd} for more details). Motivated by this reasoning, our approach is to use a randomizer that has the \emph{smallest variance possible while ensuring zero bias}, namely the term $(b)$ is zero.

\subsection{Computing Optimal Unbiased Randomizers}\label{subsec:compute-opt}

We consider a label randomizer that minimizes the noisy label loss, while being unbiased. Namely, we say that an $\eps$-\DP randomizer $\gM$ that maps the label set $\gY \subseteq \R$ to $\R$ is {\em unbiased} if it satisfies $\E_{\hy \sim \gM(y)} \hy = y$ for all input labels $y \in \gY$. We use a linear programming (LP) based algorithm (\Cref{alg:optimal-unbiased}) to compute an unbiased randomizer that minimizes the noisy label loss. In order to keep this approach computationally tractable we require that both $\gY$ and $\gYhat$ are finite; as we discuss shortly, it is possible to handle general $\gY$ by discretization using randomized rounding, and moreover the excess noisy label loss due to consideration of a discrete $\gYhat$ can be bounded as well.
Even though \Cref{alg:optimal-unbiased} is general, we will only consider $\nll = \sqloss$ henceforth (irrespective of the training loss $\ell$).

\begin{algorithm}[h]
\caption{$\computeopt_{\eps}$}
\label{alg:optimal-unbiased}
\begin{algorithmic}
\STATE {\bf Parameters:} Privacy parameter $\eps \ge 0$.
\STATE {\bf Input:} $\gP = (p_y)_{y \in \gY}$---prior over input labels $\gY$,
\STATE \phantom{\bf Input:} $\gYhat = (\hy_i)_{i \in \gI}$---a finite sequence of potential output labels.
\STATE {\bf Output:} An $\eps$-\DP label randomizer.
\STATE
\STATE Solve the following LP in variables $M = (M_{y \to i})_{y \in \gY, i \in \gI}$:

\begin{center}
\begin{mdframed}[style=default,userdefinedwidth=0.875\textwidth,linewidth=0.7pt,align=center]
\mbox{}\vspace{-2.5mm}
\begin{equation*}
\begin{aligned}
	\min_{M} & \textstyle \quad \sum_{y\in \gY} p_{y} \left(
	\sum_{\hy\in \gYhat} M_{y\to i} \cdot \nll(\hy_i, y)
	\right), && \text{subject to}\\
	\text{[Non-negativity]}&\quad \forall y\in\gY,\ i \in\gI: && M_{y\rightarrow i} \geq 0 \\
	\text{[Normalization]} &\quad \forall y\in\gY: && \textstyle \sum_{i\in\gI} M_{y\rightarrow i} = 1 \label{eq:lp-constraints} \\
	\text{[$\eps$-\LabelDP\!]} &\quad \forall i \in \gI, \forall y,y'\in \gY \text{ s.t. } y\neq y': && M_{y'\rightarrow i} \leq e^{\eps} \cdot M_{y\rightarrow i}\\
	\text{[Unbiasedness]} &\quad \forall y \in \gY: && \textstyle \sum_{i \in \gI} M_{y\rightarrow i} \cdot \hy_i ~=~ y\\
\end{aligned}
\end{equation*}
\end{mdframed}
\end{center}
\RETURN Label randomizer $\gM$ mapping $\gY$ to $\gYhat$ given by $\Pr[\gM(y) = \hy_i] = M_{y \to i}$.
\end{algorithmic}
\end{algorithm}

In \Cref{fig:exampleprioroptimal}, we illustrate a prior $\gP$ over $\gY = \{0, 1, 2\}$, using the example in \Cref{tab:toy-example}, and the corresponding unbiased randomizer returned by $\computeopt_{\eps=0.5}(\gP, \gYhat)$ for a certain fine-grained choice of $\gYhat$. It is worth noting that this randomizer is quite unlike ``randomized response'' or any additive noise mechanism! Additionally, the output labels are all outside $[0, 2]$, which is the convex hull of $\gY$. We have observed the same in our experiments in \Cref{sec:experiments} as well. One may find it surprising that these ``out of domain'' noisy labels in fact yield better trained models! This could be attributed to the Bayes optimal predictor for this randomizer being precisely $f^*_{\gD}$, since $\E [\gM(y)] = y$ for all $y \in \gY$~(\Cref{prop:bayes}).

\begin{figure}
    \centering
    \begin{subfigure}[b]{0.33\textwidth}
        \centering
        \begin{overpic}[width=\textwidth]{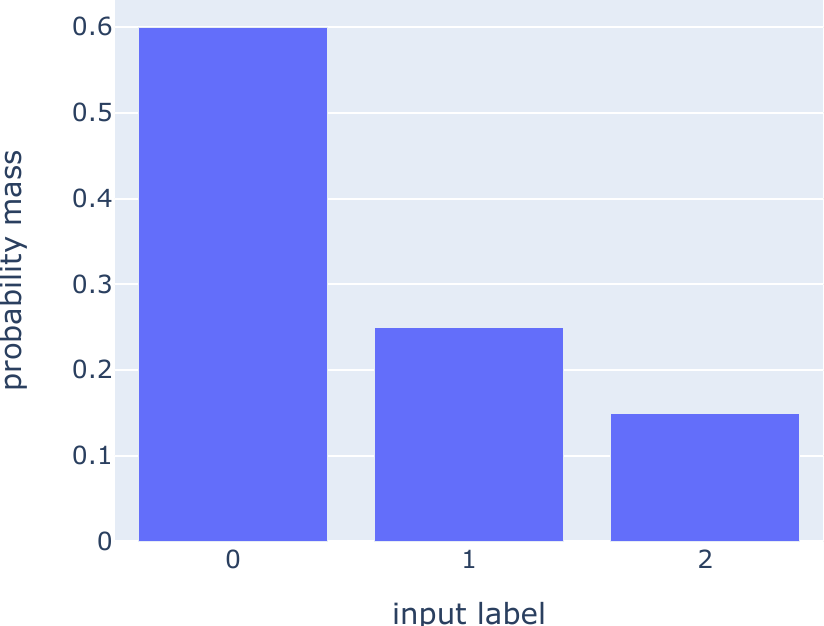}
        \put(1, 2){(a)}
        \end{overpic}
     \end{subfigure}
     \hfill
    \begin{subfigure}[b]{0.62\textwidth}
        \centering
        \begin{overpic}[width=\textwidth]{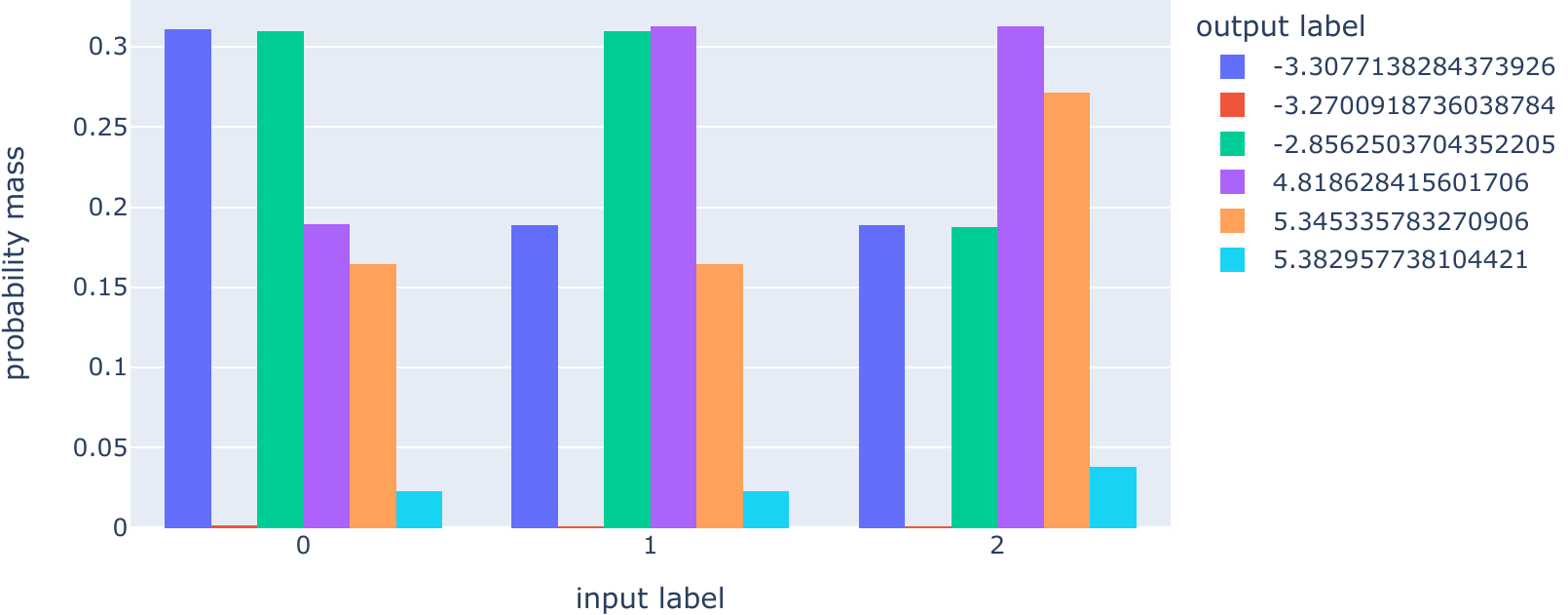}
        \put(1, 1.5){(b)}
        \end{overpic}
    \end{subfigure}
    \caption{(a) An example prior $\gP$ over $\gY = \{0, 1, 2\}$. (b) The optimal unbiased randomizer returned by $\computeopt_{\eps=0.5}(\gP, \gYhat)$ using a fine grid for $\gYhat$, indicated by the probability mass function for each input label.}
    \label{fig:exampleprioroptimal}
\end{figure}

There are however a couple of challenges to be addressed. First, it is unclear if the optimal unbiased randomizer has only finitely many labels, and even if so, what is the number of distinct output labels. Secondly, one needs to fix the choice of $\gYhat$ before hand, and it is unclear how one can choose it in a way that the LP is feasible and moreover, the solution is close to the optimal unbiased randomizer. We address these challenges as follows.

\paragraph{Structure of optimal unbiased randomizers.}
Addressing the first challenge, we show that there is an optimal unbiased randomizer that has a small number of distinct output labels.

\begin{restatable}{theorem}{outputbound}\label{thm:output-bound}
For any convex loss $\nll$, all distributions $P$ over $\R$ with finite support, and $\eps > 0$, there exists an $\eps$-\DP unbiased randomizer $\gM_* \in \inf_{\eps\text{-\DP unbiased } \gM} \NLL(\gM; \gP)$ with at most $2 |\gY|$ output labels.
\end{restatable}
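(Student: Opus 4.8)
The statement is an instance of the principle that an optimal solution of a (generalized) linear program has small support; the mild twist is obtaining the constant $2|\gY|$ rather than $2|\gY|+1$, which I would extract from a linear dependency among the constraints. Throughout write $\gY = \mathrm{supp}(P)$ and $k := |\gY|$, and fix a convex loss $\nll$.

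\emph{Step 1: reformulation as a moment problem.} Given an $\eps$-\DP randomizer $\gM$, set $\rho := \tfrac1k\sum_{y\in\gY}\gM(y)$ and $g_y := d\gM(y)/d\rho$; for $\rho$-a.e.\ $\hy$ the column $g(\hy) := (g_y(\hy))_{y\in\gY}$ lies in the compact polytope $C_1 := \{v\in\R^{k}_{\ge 0}: \sum_y v_y = k,\ \max_y v_y \le e^{\eps}\min_y v_y\}$, the ratio condition being precisely $\eps$-\DP. Put $\Omega := \R\times C_1$ and let $\mu$ be the pushforward of $\rho$ under $\hy\mapsto(\hy,g(\hy))$, a probability measure on $\Omega$. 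Conversely, any probability measure $\mu$ on $\Omega$ with $\int v_y\,d\mu=1$ for all $y$ induces an $\eps$-\DP randomizer via $\E_{\hy\sim\gM(y)}[h(\hy)] := \int_\Omega h(\hy)\,v_y\,d\mu$; it is unbiased iff $\int \hy\,v_y\,d\mu = y$ for all $y$, it satisfies $\NLL(\gM;\gP) = \int_\Omega F\,d\mu$ with $F(\hy,v):=\sum_y p_y\,\nll(\hy,y)\,v_y$, and it uses at most $|\mathrm{supp}(\mu)|$ distinct output labels. Thus the problem becomes: minimize the linear functional $\mu\mapsto\int F\,d\mu$ over probability measures on $\Omega$ subject to the $2k$ linear moment constraints $\int v_y\,d\mu = 1$ and $\int \hy v_y\,d\mu = y$.

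\emph{Step 2: Carathéodory with the dependency.} View $\R^{2k+1}$ with coordinates $(t,(a_y)_{y\in\gY},(b_y)_{y\in\gY})$ and consider $\tilde\Psi:\Omega\to\R^{2k+1}$, $\tilde\Psi(\hy,v) := (F(\hy,v),\,(v_y)_{y},\,(\hy v_y)_{y})$. The key observation is that $\sum_y v_y\equiv k$ on $C_1$, so $\tilde\Psi(\Omega)$, and hence $K := \mathrm{conv}(\tilde\Psi(\Omega))$, lies in the affine hyperplane $A_0 := \{x\in\R^{2k+1}: \textstyle\sum_y a_y = k\}$, which has dimension $2k$. Let $f^*$ be the infimal noisy-label loss; after Step 3 (or by Rogosinski's theorem) it is attained by a finitely supported measure whose $\tilde\Psi$-moment vector is $\mathbf m := (f^*,(1)_{y},(y)_{y})\in K$. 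Since $f^*$ is the minimum of the $t$-coordinate over points of $K$ whose remaining coordinates equal the fixed vector $((1)_y,(y)_y)$, the point $\mathbf m$ lies on the relative boundary of $K$ inside $A_0$; choosing a supporting hyperplane $A_1\subsetneq A_0$ of $K$ at $\mathbf m$ gives $\mathbf m \in K\cap A_1 = \mathrm{conv}(\tilde\Psi(\Omega)\cap A_1)$ with $\dim A_1 = 2k-1$. Carathéodory's theorem in $A_1$ now writes $\mathbf m = \sum_{i=1}^{2k}\alpha_i\,\tilde\Psi(\omega_i)$ with $\omega_i\in\Omega$, $\alpha_i\ge 0$, $\sum_i\alpha_i=1$; the atomic measure $\sum_i\alpha_i\delta_{\omega_i}$ then induces (via Step 1) an $\eps$-\DP unbiased randomizer $\gM_*$ with $\NLL(\gM_*;\gP)=f^*$ and at most $2k = 2|\gY|$ output labels. (In the degenerate case where $F$ is affine in the $(a_y,b_y)$, i.e.\ $\nll(\cdot,y)$ is affine for each $y$, unbiasedness makes $\NLL$ constant over unbiased randomizers and one applies the same count to the $2k$ constraints alone, which already map into a $(2k-1)$-flat.)

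\emph{Step 3: the main obstacle.} The only point needing genuine work is the existence of an optimal randomizer and the attendant compactness, since $\Omega$ is non-compact and one must rule out mass escaping to infinity along a minimizing sequence. Note $\NLL$ is finite and, by Jensen plus unbiasedness, bounded below by $\sum_y p_y\,\nll(y,y)$. For $\nll=\sqloss$ — the case the paper actually uses — this is painless: unbiasedness gives $\NLL(\gM;\gP) = \sum_y p_y\Var[\gM(y)]$, so a near-optimal family has uniformly bounded variances and means and is therefore tight; $\eps$-\DP and unbiasedness survive weak-$*$ limits, $\NLL$ is weak-$*$ lower semicontinuous, so a minimizer exists, and a Chebyshev/truncation argument then lets one take all output labels in a fixed bounded interval, making $\Omega$ compact and $\mathrm{conv}(\tilde\Psi(\Omega))$ closed. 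For a general convex $\nll$ the same tightness argument works under a mild coercivity assumption; alternatively one can skip the compactification entirely and invoke Rogosinski's theorem (the moment vector of any probability measure is realized by a finitely supported one) to place $\mathbf m$ inside $\mathrm{conv}(\tilde\Psi(\Omega))$ before taking the supporting hyperplane in Step 2. Everything else is bookkeeping.
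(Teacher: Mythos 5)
Your proposal is correct in outline, but it takes a genuinely different route from the paper. The paper argues structurally: using convexity of $\nll$ and Jensen's inequality it first merges all output labels whose transition-probability columns are positive multiples of the same staircase pattern column (so an optimal unbiased randomizer may be taken to be a staircase mechanism with at most $2^{|\gY|}$ outputs), and then uses the real ordering of the labels together with unbiasedness to show that the $\sfU$-blocks of the signature matrix are contiguous in each column and move forward monotonically with $y$; counting such columns gives $2|\gY|$. You instead recast the problem as a generalized moment LP over probability measures on (output label, likelihood-ratio column) pairs and obtain the cardinality bound by pure dimension counting: $2k$ moment constraints, one linear dependency from $\sum_y v_y \equiv k$, and one more dimension removed by a supporting hyperplane at the optimal moment vector, followed by Carath\'eodory. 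What your route buys: the counting step never uses convexity of $\nll$ (convexity or coercivity enters only through existence of a minimizer), it treats randomizers with continuous output distributions natively, and it is explicit about the attainment question that the paper silently assumes when its proof begins from ``any optimal unbiased randomizer''; your tightness argument for $\sqloss$ (the loss actually used) fills that in. What it loses: the paper's argument certifies the structural form of the optimum---a staircase mechanism with monotone contiguous blocks---which the authors emphasize in the main text, whereas your argument certifies only the support size. Two minor points: your parenthetical ``degenerate case'' targets affine $F$, but the relevant degeneracy is $\mathrm{aff}(K) \subsetneq A_0$, in which case Carath\'eodory in the lower-dimensional affine hull already yields $2k$ atoms, so the dichotomy is harmless; and the identity $K \cap A_1 = \mathrm{conv}(\tilde\Psi(\Omega) \cap A_1)$ requires the algebraic (not closed) convex hull, which is exactly why the Rogosinski/finite-support step must precede the supporting-hyperplane step, as you note.
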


The main ideas in the proof are as follows: we use the convexity of $\nll$ to show that an optimal $\eps$-\DP unbiased randomizer $\gM$ must be of a certain form; if not we can use Jensen's inequality to construct another randomizer $\gM'$ with that form such that $\NLL(\gM'; P) \le \NLL(\gM; \gP)$. In particular, we show that the randomizer must be a \textit{staircase mechanism} as defined in \citep{KairouzOV16}, a mechanism that maximally satisfies the DP constraints. This already implies that the number of output labels is at most $2^{|\gY|}$. We use the ordering of the labels, along with similar reductions using convexity of $\nll$, to show that the optimal $\eps$-\DP unbiased randomizer further lies within a special subset of staircase mechanisms with $2 |\gY|$ output labels. The detailed proof is provided in \Cref{apx:structural}.

\paragraph{\boldmath Choosing $\gYhat$ to ensure feasibility and good coverage.}
To address the second challenge, we use a finite set of output labels. We show an upper bound on the excess noisy label loss due to discretization.

We use a heuristic (\Cref{alg:output-labels-heuristic}) for setting $\gYhat$ to be a grid, in a way that ensures that the LP in \Cref{alg:optimal-unbiased} has a feasible solution while keeping $\gYhat$ small enough to be able to efficiently solve the LP. To compute the endpoints of the grid, we use an unbiased randomizer with a finite and bounded support: the {\em $\eps$-\DP debiased randomized response} ($\eps$-$\dbRR$) on the set of labels, which operates by mapping the inputs to a unique set of values such that under randomized response the randomizer is unbiased (see \Cref{apx:dp_mech_defs} for a definition). We choose the endpoints of our grid to be precisely the minimum and maximum among the possible outputs of $\eps$-$\dbRR$
(see \Cref{alg:output-labels-heuristic} for details).
With those two endpoints defined, we create the grid by evenly spacing output labels along this interval. The number of output labels we generate is as large as possible while maintaining that the LP solver terminates in a reasonable amount of time. We show that having these endpoints suffices to ensure feasibility of the LP in \Cref{alg:optimal-unbiased}.

\begin{restatable}{proposition}{dbrrfeasiblity}\label{prop:dbrr-feasibility}
If $\gYhat$ contains just the smallest and largest values among the outputs of $\eps$-$\dbRR$, the LP in $\computeopt_{\eps}(P, \gYhat)$ is feasible.
\end{restatable}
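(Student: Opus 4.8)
The plan is to exhibit an explicit feasible point of the LP in $\computeopt_{\eps}(P,\gYhat)$ by post-processing the $\eps$-$\dbRR$ mechanism. Recall that $\eps$-$\dbRR$ is, by construction, an $\eps$-\DP randomizer that is exactly unbiased --- i.e.\ $\E_{z\sim\dbRR(y)}[z] = y$ for every $y \in \gY$, writing $\dbRR(y)$ for its output distribution on input $y$ --- and whose output support is a finite set of reals. Let $a$ and $b$ be the smallest and largest values in that support, so every realizable output $z$ of $\eps$-$\dbRR$ satisfies $a \le z \le b$, and by hypothesis $\{a,b\}\subseteq\gYhat$. If $a = b$, which forces $|\gY| = 1$, feasibility is immediate, so assume $a < b$.

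Define the randomized post-processing map $T$ that sends a value $z \in [a,b]$ to $b$ with probability $\tfrac{z-a}{b-a}$ and to $a$ with probability $\tfrac{b-z}{b-a}$; these are legitimate probabilities precisely because all outputs of $\eps$-$\dbRR$ lie in $[a,b]$, and by construction $\E[T(z)] = z$. Let $\gM$ be the randomizer that runs $\eps$-$\dbRR$ on its input and then applies $T$ to the result, and let $M = (M_{y\to i})_{y\in\gY,\,i\in\gI}$ be its transition matrix; $M$ is supported on the two coordinates of $\gYhat$ equal to $a$ and $b$ and is zero on every other coordinate. I claim $M$ is feasible: non-negativity and normalization hold since each row of $M$ is a genuine probability distribution; the $\eps$-\LabelDP constraints $M_{y'\to i}\le e^{\eps}M_{y\to i}$ hold because $\gM$ is a post-processing of an $\eps$-\DP mechanism and post-processing preserves \DP; and the unbiasedness constraint holds by the tower rule, $\E[\gM(y)] = \E_{z\sim\dbRR(y)}\big[\,\E[T(z)]\,\big] = \E_{z\sim\dbRR(y)}[z] = y$.

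I do not anticipate any real obstacle here: the argument is just post-composition of $\eps$-$\dbRR$ with the unique unbiased two-atom split onto $\{a,b\}$, and it uses only the two structural facts recalled above (that $\eps$-$\dbRR$ is $\eps$-\DP and exactly unbiased, and that its support is finite with extreme values $a,b$), both of which come directly from the definition in \Cref{apx:dp_mech_defs}. A more hands-on alternative, if one prefers to avoid invoking the post-processing lemma, is to write the candidate solution explicitly as $M_{y\to b} = \tfrac{y-a}{b-a}$ and $M_{y\to a} = 1-M_{y\to b}$ --- well-defined since the $\eps$-$\dbRR$ outputs are an order-preserving affine image of the labels, so $a \le y \le b$ for all $y\in\gY$ --- and then check the four constraints directly using the closed form of $a$ and $b$; the $\eps$-\DP inequalities then reduce, by monotonicity of $y\mapsto M_{y\to b}$, to two extremal inequalities that follow from elementary manipulations (e.g.\ $(|\gY|-2)\,y_{\min}\le \sum_{y\in\gY} y - y_{\min} - y_{\max}$). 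I would present the post-processing version as the main proof since it is shorter and computation-free.
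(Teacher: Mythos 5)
Your proof is correct, and the feasible point you exhibit is in fact the very same one the paper uses: since $\E[\dbRR(y)]=y$, composing $\eps$-$\dbRR$ with your two-atom rounding map $T$ yields exactly the transition probabilities $M_{y\to b}=\frac{y-a}{b-a}$, $M_{y\to a}=\frac{b-y}{b-a}$, which is the interpolation mechanism the paper writes down directly (with $a=\hy_0$, $b=\hy_1$ the extreme $\dbRR$ outputs). The difference is purely in how the constraints are verified. The paper never invokes $\dbRR$ as a mechanism at all: it defines the interpolation witness explicitly and checks the $\eps$-\DP ratio by hand, reducing $\max_y M_{y\to a}/\min_y M_{y\to a}\le e^{\eps}$ to the elementary inequality $(a+b)/(a+c)\le b/c$; this is essentially your ``hands-on alternative,'' and your sample extremal inequality $(|\gY|-2)y_{\min}\le\sum_{y}y-y_{\min}-y_{\max}$ is indeed what that check reduces to. Your main route instead gets \DP for free from the post-processing property and unbiasedness from the tower rule, which makes the argument computation-free but shifts the burden onto the unbiasedness of $\eps$-$\dbRR$ itself: the definition in \Cref{apx:dp_mech_defs} only gives the formula for $\Phi$, and the paper asserts unbiasedness in prose without proof, so for completeness you should include the one-line verification (using $\sum_{y'}\Phi(y')=\sum_{y'}y'$, so that $\E[\dbRR(y)]=\frac{(e^{\eps}-1)\Phi(y)+\sum_{y'}\Phi(y')}{e^{\eps}+|\gY|-1}=y$). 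With that line added, your post-processing version is a clean, slightly more conceptual proof of the same feasibility statement; the paper's version buys self-containedness at the cost of a short explicit calculation.
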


We defer the proof to \Cref{apx:dp_mech_defs}; the main idea is that these endpoints create an interval that contains the interval one would obtain from the debiased randomized response on the two label set of the minimum label and maximum label.
Because the $\gYhat$ we choose has labels less than this minimum and larger than this maximum, all labels in between can be generated by interpolation.

Additionally, we show that using the randomizer optimized on the grid $\gYhat$ gives bounded excess noisy label loss compared to optimizing over a continuous set of labels for Lipschitz loss functions. 
\begin{restatable}{lemma}{discretizationLoss}\label{lem:discretization-loss}
Let $\gM$ be the optimal unbiased randomizer with $\gYhat = [L, U],$ and let $\gM_\Delta$ be the optimal unbiased randomizer with $\gYhat = \{L, L + \Delta, \ldots, U - \Delta, U\}$. If $g(\hy, y)$ is $K$-Lipschitz in $\hy$, then $\NLL(\gM_\Delta; \gP) \leq \NLL(\gM; \gP) + K\Delta$.
\end{restatable}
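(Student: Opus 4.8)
The plan is to start from the optimal unbiased randomizer $\gM$ on the continuum $[L,U]$ and produce a feasible randomizer $\gM_\Delta'$ on the grid $\{L, L+\Delta, \ldots, U\}$ whose noisy label loss exceeds that of $\gM$ by at most $K\Delta$; since $\gM_\Delta$ is optimal among grid randomizers, $\NLL(\gM_\Delta;\gP)\le\NLL(\gM_\Delta';\gP)\le\NLL(\gM;\gP)+K\Delta$ follows. The construction of $\gM_\Delta'$ is a rounding-with-randomization: for each input label $y$ and each realized output $\hy\in[L,U]$ of $\gM(y)$, let $\hy$ lie in a grid cell $[a, a+\Delta]$ with $a, a+\Delta$ consecutive grid points, and replace $\hy$ by the two-point distribution on $\{a, a+\Delta\}$ with probabilities $\tfrac{a+\Delta-\hy}{\Delta}$ and $\tfrac{\hy-a}{\Delta}$ (the standard ``randomized rounding'' that preserves the mean). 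Concretely, if $\gM(y)$ has density/mass described by $M_{y\to\cdot}$ on $[L,U]$, then $\gM_\Delta'(y)$ places on grid point $g$ the mass $\int \lambda_g(\hy)\,dM_{y\to\hy}$ where $\lambda_g$ is the piecewise-linear ``tent'' function equal to $1$ at $g$ and $0$ at adjacent grid points.

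The three things to check are: (i) $\gM_\Delta'$ is $\eps$-\DP; (ii) $\gM_\Delta'$ is unbiased; (iii) the loss increases by at most $K\Delta$. For (i), note that the mass of $\gM_\Delta'(y)$ at a grid point $g$ is a fixed nonnegative linear functional (integration against $\lambda_g\ge 0$) applied to the measure $\gM(y)$, and the same functional is used for every $y$; hence the $\eps$-multiplicative bound $M_{y'\to\hy}\le e^\eps M_{y\to\hy}$ is preserved under this averaging, giving $\Pr[\gM_\Delta'(y')=g]\le e^\eps\Pr[\gM_\Delta'(y)=g]$. For (ii), the tent functions satisfy $\sum_g g\,\lambda_g(\hy)=\hy$ for every $\hy\in[L,U]$ (exact reproduction of linear functions by piecewise-linear interpolation on the grid), so $\E[\gM_\Delta'(y)]=\int \hy\,dM_{y\to\hy}=\E[\gM(y)]=y$. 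For (iii), I would compare conditionally on the realized $\hy$ of $\gM(y)$: the grid randomizer outputs a point within distance $\Delta$ of $\hy$, so by the $K$-Lipschitz hypothesis in the first argument, $\E_{g}\,g(g,y)\le g(\hy,y)+K\Delta$; taking expectations over $\hy\sim\gM(y)$ and then $y\sim\gP$ gives $\NLL(\gM_\Delta';\gP)\le\NLL(\gM;\gP)+K\Delta$.

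I expect the main (minor) obstacle to be purely bookkeeping rather than conceptual: being careful that $\gM$, the ``optimal unbiased randomizer with $\gYhat=[L,U]$,'' is well-defined — i.e., that an optimal (or near-optimal) randomizer supported on a continuum exists, or else phrasing the argument with an $\eta$-approximate optimizer and letting $\eta\to 0$ — and handling the edge cases where $\hy$ lands exactly on a grid point or at the endpoints $L,U$ (there the tent construction degenerates harmlessly). Everything else is the standard mean-preserving randomized-rounding argument together with the one-line Lipschitz estimate. If one prefers to avoid measure-theoretic subtleties entirely, one can instead invoke Theorem~\ref{thm:output-bound} to assume $\gM$ is supported on finitely many points of $[L,U]$ from the outset, which makes the integrals finite sums and the argument fully elementary.
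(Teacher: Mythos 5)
Your proposal is correct and matches the paper's argument essentially step for step: the paper also post-processes $\gM$ with the mean-preserving (unbiased) randomized rounding $\URR$ onto the grid, notes the rounded output lies within $\Delta$ of the original so $K$-Lipschitzness gives the $K\Delta$ excess, and concludes by optimality of $\gM_\Delta$ among grid randomizers. Your extra checks (the $\eps$-\DP property, which in the paper is implicit via post-processing, and the tent-function bookkeeping) are fine but not needed beyond what the paper does.
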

In the case of $g(\hy, y) = \frac{1}{2}(\hy - y)^2$, we have $K = (U - L).$ Therefore, as the grid gets finer, the excess noisy label loss of $\gM_\Delta$ obtained from \ref{alg:optimal-unbiased} over the noisy label loss of $\gM$ scales linearly in the discretization parameter $\Delta$. We present the proof in \Cref{apx:discrete-loss-bound}.

For large $\eps$ values, we experimentally observe that the optimal unbiased randomizer appears to approach $\eps$-$\dbRR$. For small $\eps$ values, we also experimentally observe that the optimal unbiased randomizer appears to be supported on the labels of the debiased randomized response on the two label set of the minimum label and the maximum label. This justifies our choice of endpoints for grid.

\begin{algorithm}[t]
\caption{$\feasibleout_{\eps}$}
\label{alg:output-labels-heuristic}
\begin{algorithmic}
\STATE {\bf Parameters:} Privacy parameter $\eps \ge 0$.
\STATE {\bf Input:} Set of input labels $\gY$. Positive integer $n \geq 2$ representing size of output $|\gYhat|.$
\STATE {\bf Output:} Set of output values $\gYhat$ that guarantee feasibility of LP in \Cref{alg:optimal-unbiased}.
\STATE $L \leftarrow \left((e^\eps + |\gY| - 1) \cdot \min(\gY) - \sum_{y \in \gY} y \right) / (e^\eps - 1)$
\STATE $U \leftarrow \left((e^\eps + |\gY| - 1) \cdot \max(\gY) - \sum_{y \in \gY} y \right) / (e^\eps - 1)$
\STATE $\Delta \leftarrow (U-L)/(n-1)$
\RETURN $\gYhat = (L, L+\Delta, L+2\Delta, \ldots, U-\Delta, U)$
\end{algorithmic}
\end{algorithm}

\begin{wrapfigure}{R}{0.5\textwidth}
\vspace{-8mm}
\begin{minipage}{0.5\textwidth}
\begin{algorithm}[H]
\caption{$\datran_{\eps_1, \eps_2}$.}
\label{alg:label-randomizer}
\begin{algorithmic}
\STATE {\bf Parameters:} Privacy parameters $\eps_1, \eps_2 \ge 0$.
\STATE {\bf Input:} Labels $y_1, \dots, y_n \in \gY$.
\STATE {\bf Output:} $\hty_1, \dots, \hty_n \in \gYhat$.\\[1mm]
\STATE $P' \leftarrow \gM^{\Lap}_{\eps_1}(y_1, \dots, y_n)$
\STATE $\gYhat \gets \feasibleout_{\eps_1}(\gY)$
\STATE $\gM \leftarrow \computeopt_{\eps_2}(\gP', \gYhat)$
\FOR{$i \in [n]$}
\STATE $\hy_i \leftarrow \gM(y_i)$
\ENDFOR
\RETURN $(\hy_1, \dots, \hy_n)$ 
\end{algorithmic}
\end{algorithm}
\end{minipage}
\vspace{-2mm}
\end{wrapfigure}

\paragraph{Splitting budget between prior and label.}
So far, we have assumed a known prior $P$ over input labels $\gY$. However, this is typically not the case, and so we use the standard Laplace mechanism to privately estimate the prior. Given $n$ samples drawn from $P$, $\mlap_{\eps}$ constructs a histogram over $\gY$ and adds Laplace noise with scale $2/\eps$ to each entry, followed by clipping (to ensure that entries are non-negative) and normalization. For completeness, we include a formal description of $\mlap_{\eps}$ in \Cref{apx:mlap}.

Our randomizer for the unknown prior case, described in \Cref{alg:label-randomizer}, thus operates by splitting the privacy budget into $\eps_1, \eps_2$, using $\mlap_{\eps_1}$ to get an approximate prior distribution $P'$, and using  the randomizer returned by $\computeopt_{\eps_2}(\gP', \gYhat)$ to randomize the labels. It follows that the entire algorithm is $(\eps_1 + \eps_2)$-\DP.

\paragraph{\boldmath Handling continuous $\gY$.}
While we focused on the case of finite sets $\gY$, the approach can be extended to the case where $\gY$ is a continuous set, say $\gY = [L, U]$. In this case, we can first choose a finite discrete subset $\widetilde{\gY} \subseteq \gY$ that contains both $L$ and $U$, e.g.,  $\widetilde{\gY} = \{L, L+\Delta, \ldots, U\}$, and then apply {\em unbiased randomized rounding} $\URR_{\widetilde{\gY}}(\cdot)$ to all labels before applying any mechanism $\gM$.

Namely, for any label $y \in [L, U]$ such that $y_1 \le y \le y_2$ for $y_1, y_2$ being consecutive elements of $\widetilde{\gY}$, $\URR_{\widetilde{\gY}}(y)$ returns $\widetilde{y}$ drawn as $y_1$ with probability $\frac{y_2 - y}{y_2 - y_1}$ and $y_2$ with probability $\frac{y - y_1}{y_2 - y_1}$.
This ensures that $\E[\widetilde{y}] = y$ and hence, for any unbiased mechanism $\gM$ that acts on inputs in $\widetilde{\gY}$, it holds for all $y \in \gY$ that $\E[\gM(\URR_{\widetilde{\gY}}(y))] = \E[\URR_{\widetilde{\gY}}(y)] = y$. Furthermore, as the gap between consecutive points of $\widetilde{\gY}$ decreases, the variance introduced by $\URR_{\widetilde{\gY}}$ also decreases.

\section{Experimental Evaluation}\label{sec:experiments}

We evaluate our randomizer on three datasets, and compare with the Laplace mechanism~\citep{DworkMNS06}, the additive staircase mechanism~\citep{geng2014optimal}, 
and the $\RRonBins$ method \cite{GhaziKKLMVZ2022}. The Laplace mechanism and the additive staircase mechanism both have a discrete and a continuous variant. Following \citet{GhaziKKLMVZ2022}, we use the continuous variants for real-valued labels (the Criteo Sponsored Search Conversion dataset), and the discrete variants for integer-valued labels (the US Census dataset and the App Ads Conversion Count dataset). Note that in the experiments from \citet{GhaziKKLMVZ2022}, the noised labels from both the Laplace mechanism and the additive staircase mechanism were clipped to be in the valid label value range, as for small $\varepsilon$'s, the magnitude of the noised labels could be orders of magnitudes larger than the original label values, potentially causing numerical instability in model training. However, in our study, we found that with more careful hyperparameter tuning and early stopping (see \Cref{app:exp-details}), the learning could be stabilized for sufficiently small values (e.g., $\varepsilon\geq 0.3$) that is practically useful in ML, and in this case, the unclipped (therefore also unbiased) version of both the Laplace and the additive staircase mechanisms outperform their clipped counterpart. For reference, we present the results for both clipped and unclipped variants for those two mechanisms. We note that all of these mechanisms can be implemented in the feature-oblivious label DP setting~(\Cref{fig:feature_oblivious_label_DP}). More details on model and training configurations are presented in \Cref{app:exp-details}.

\subsection{Criteo Sponsored Search Conversion}
The Criteo Sponsored Search Conversion Log Dataset~\citep{tallis2018reacting} is a collection of  $15,995,634$ data points derived from a sample of $90$-day logs of live traffic from Criteo Predictive Search (CPS). Each example contains information of an user action (e.g., a click on an advertisement) and a subsequent conversion (purchase of the related product) within a $30$-day attribution window. We use the setup of feature-oblivious label DP to predict the revenue in \texteuro{} (i.e., the \texttt{SalesAmountInEuro} field in the dataset) of a conversion. We apply the following preprocessing steps: filtering out examples with \texttt{SalesAmountInEuro} being $-1$, and clipping the labels at the $95$th percentile of the value distribution ($400$\texteuro).

\begin{figure}
    \centering
    \begin{subfigure}[t]{0.24\textwidth}
         \centering
         \includegraphics[width=\textwidth]{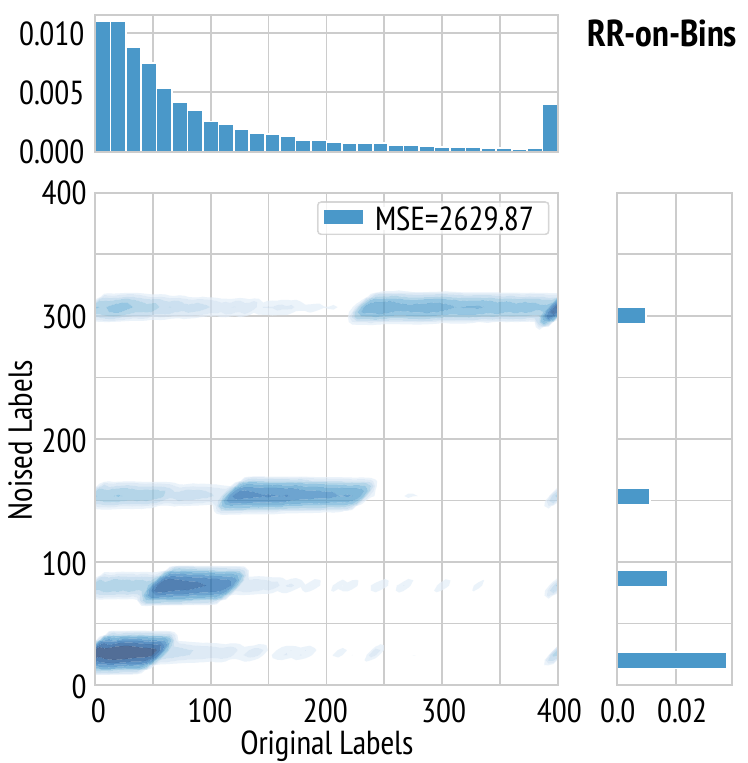}
     \end{subfigure}
    \begin{subfigure}[t]{0.24\textwidth}
         \centering
         \includegraphics[width=\textwidth]{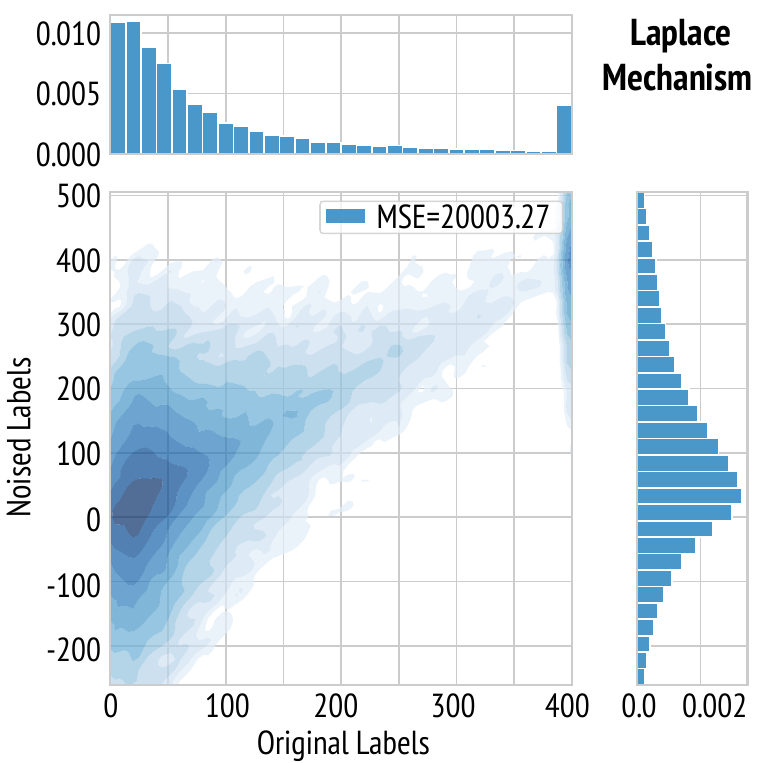}
    \end{subfigure}
    \begin{subfigure}[t]{0.24\textwidth}
         \centering
         \includegraphics[width=\textwidth]{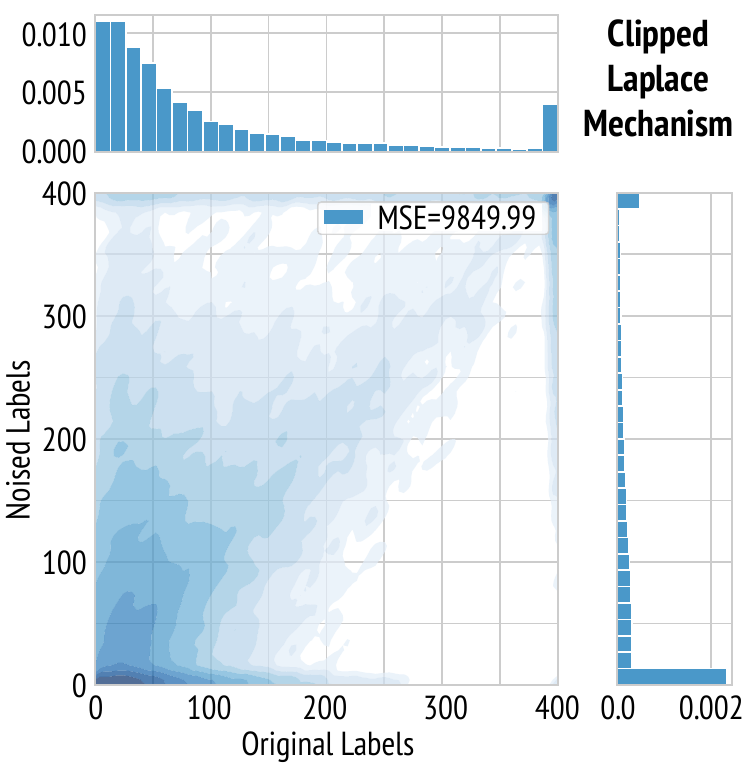}
    \end{subfigure}
    \begin{subfigure}[t]{0.24\textwidth}
         \centering
         \includegraphics[width=\textwidth]{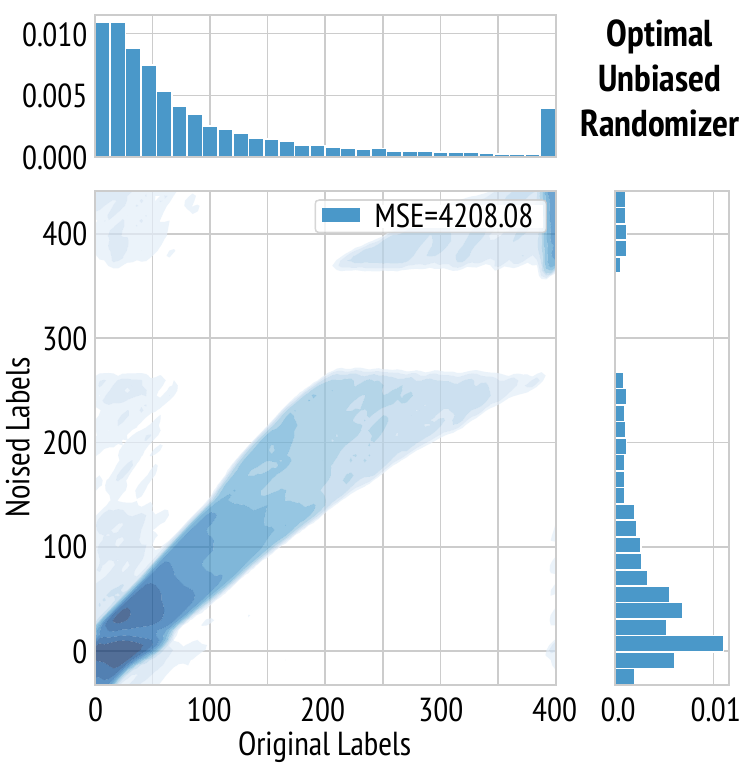}
    \end{subfigure}
    \caption{Illustration of various $\eps$-\DP label randomizers for $\eps=4$. The 2D density plot contours are generated in log scale using Gaussian kernel density estimates. The legends show the MSE between the original labels and the $\eps$-\DP randomized labels.}
    \label{fig:criteo-joint}
\end{figure}

In \Cref{fig:criteo-joint}, we visualize an example of the various label randomizers. For $\eps=4$, the optimal $\RRonBins$ randomizer maps the input values to one of four distinct values. On the other hand, for the optimal unbiased randomizer, the joint distribution of the sensitive labels and randomized labels maintains an overall concentration along the diagonal. The joint distribution for the Laplace mechanism is quite spread out.

In \Cref{fig:criteo-results}, we compare the noisy label loss on the training set and the mean squared error on the test set for all the randomizers considered. For each randomizer, the label randomization and training was run with 10 independent random seeds, and the plots show the average and standard deviation bars. For $\RRonBins$ and the optimal unbiased randomizer, we use $\eps_1 = 0.017$ for privately estimating the prior using $\mlap_{\eps_1}$, and $\eps_2 = \eps - \eps_1$ for randomizing the label, following the guidance in \Cref{apx:mlap}.
We find that the optimal unbiased randomizer (\Cref{alg:label-randomizer}) achieves the smallest test mean squared error across a wide range of $\eps$ values. It is interesting to note that the unbiased randomizers (Laplace and additive staircase mechanisms and the optimal unbiased randomizer) vastly outperform the biased randomizers (clipped versions of Laplace and additive staircase mechanisms as well as the $\RRonBins$) in terms of test loss, even though the noisy label loss is an order of magnitude larger for the unbiased randomizers.

\begin{figure}
    \centering
    \begin{subfigure}[b]{0.49\textwidth}
         \centering
         \includegraphics[width=\textwidth]{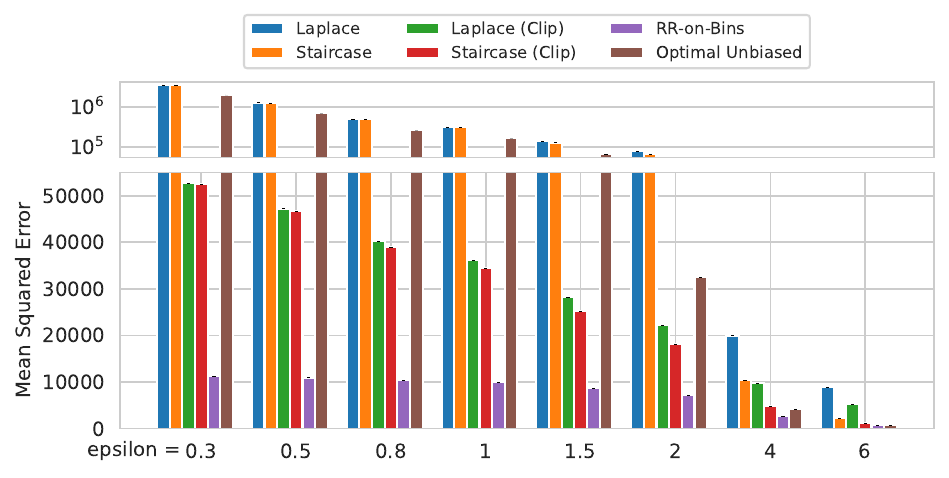}
         \caption{Noisy Label Loss}
    \end{subfigure}
    \begin{subfigure}[b]{0.49\textwidth}
         \centering
         \includegraphics[width=\textwidth]{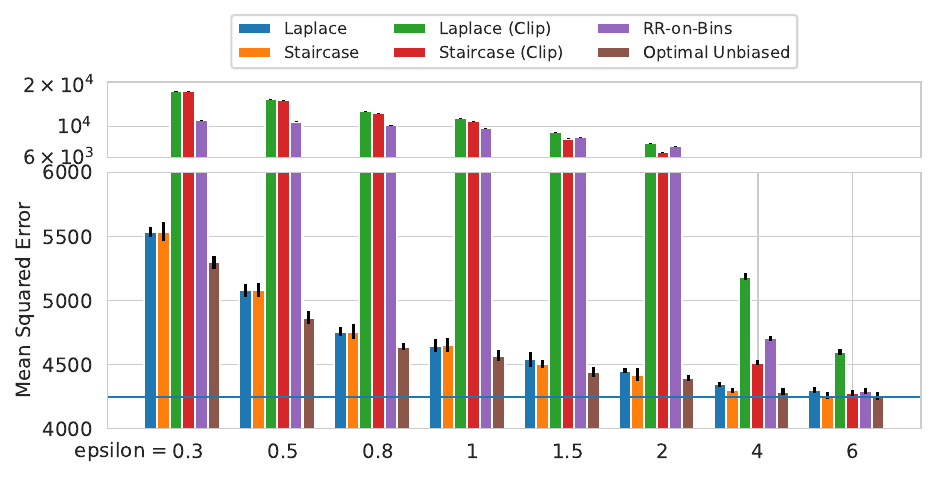}
         \caption{Test Loss}
    \end{subfigure}
    \caption{Comparison of different label DP randomizers on the Criteo Search dataset: (a) shows the mean squared error between the original labels and the privatized labels on the training set from each randomizer; (b) shows the mean squared error between the model predictions and the groundtruth labels on the test set. The solid line indicates the non-private baseline.}
    \label{fig:criteo-results}
\end{figure}

\subsection{US Census}
The $1940$ US Census dataset\footnote{\url{https://www.archives.gov/research/census/1940}} is widely used in the evaluation of  data analysis with DP~\citep{wang2019answering,cao2021data,ghazi2021differentially}. This digitally released dataset in 2012 contains $131,903,909$ examples. We follow the task studied in \cite{GhaziKKLMVZ2022} and evaluate label DP algorithms by learning to predict the number of weeks each respondent worked during the previous year (the \texttt{WKSWORK1} field).

In \Cref{fig:uscensus-results} we compare the mean squared error on the test set for all the randomizers considered.
For each randomizer, the label randomization and training was run with 10 independent random seeds, and the plots show the average and standard deviation bars. For $\RRonBins$ and the optimal unbiased randomizer, we use $\eps_1 = 0.002$ for privately estimating the prior using $\mlap_{\eps_1}$, and $\eps_2 = \eps - \eps_1$ for randomizing the label, following the guidance in \Cref{apx:mlap}. We find that the optimal unbiased randomizer achieves the smallest test mean squared error across a wide range of $\eps$ values.

\begin{figure}
    \centering
    \begin{subfigure}[b]{0.49\textwidth}
         \centering
         \includegraphics[width=\textwidth]{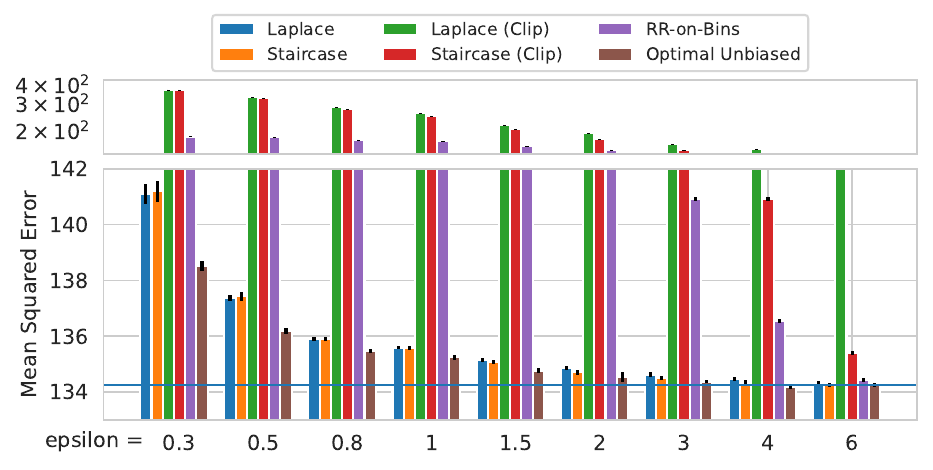}
         \caption{US Census}
         \label{fig:uscensus-results}
    \end{subfigure}
    \begin{subfigure}[b]{0.49\textwidth}
         \centering
         \includegraphics[width=\textwidth]{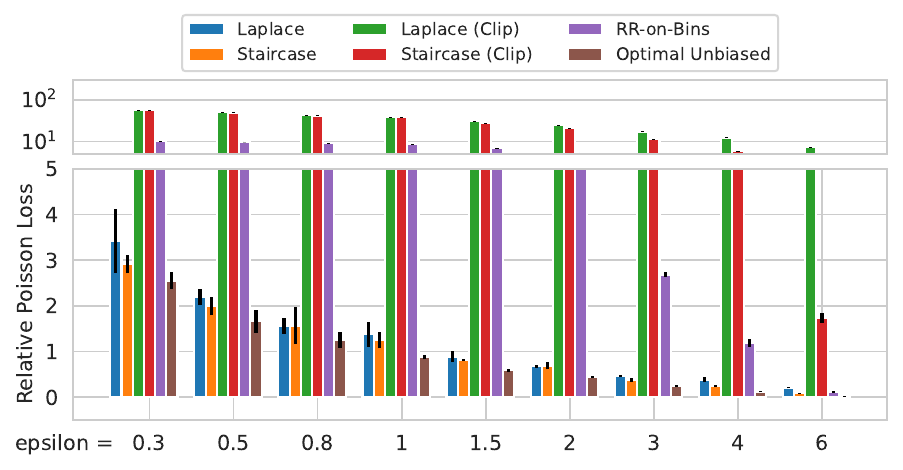}
         \caption{App Ads}
         \label{fig:appads-results}
    \end{subfigure}
    \caption{Comparison of different label DP randomizers: (a) shows the mean squared error of the prediction on the US Census test set. The solid line indicates the non-private baseline. (b) shows the relative Poisson loss (i.e., $(L-L^\star)/L^\star$) with respect to the non-private baseline Poisson loss $L^\star$.}
    \label{fig:uscensus_appads}
\end{figure}

\subsection{App Ads Conversion Count}

We also evaluate on a conversion count prediction dataset collected from a commercial mobile app store. Each example in this dataset corresponds to an ad click and the task is to predict the number of post-click conversion events in the app after a user installs the app within a certain time window.

In \Cref{fig:appads-results} we compare the relative Poisson loss on the test set for all the randomizers considered.
For each randomizer, the label randomization and training was run with 6 independent random seeds.
For the optimal unbiased randomizer, Laplace and additive staircase mechanisms, at low $\eps$'s we experience blowup in the training loss during training. The plot show the average and standard deviation bars, at the time of lowest training loss, right before blowup. We see that for all $\eps$'s, all of the unbiased randomizers vastly outperform the others, including $\RRonBins$. Among the unbiased randomizers (optimal unbiased randomizer, Laplace and additive staircase mechanisms), the optimal unbiased randomizer performs the best at all $\eps$'s. One caveat is that for the smaller $\eps$'s, one can see the standard error increase. We hypothesize this is not an inherent feature of the randomizers but due to the blow up of the model training, reducing the training length and adding additional variance to the error, a behavior that seems specific to the AppAds dataset.

\section{Related Work}\label{sec:related_work}
DP learning has been the subject of considerable research spanning different settings that include empirical risk minimization \cite{chaudhuri2011differentially}, PAC learning \cite{bun2020equivalence}, training neural networks \cite{abadi2016deep}, online learning \cite{jain2012differentially}, and regression \cite{wang2018revisiting}. For the label DP setting, \cite{chaudhuri2011sample, beimel2013private} studied the sample complexity of classification, while \cite{wang2019sparse} studied sparse linear regression in the local DP model \cite{evfimievski2003limiting, KasiviswanathanLNRS11}. For training deep neural networks with label DP, \cite{ghazi2021deep} studied the classification setting and gave a multi-stage training procedure where the priors on the labels in the previous stage are used to define an LP that is optimized to find the randomization mechanism for the next stage. For the classification loss, they characterized the optimal solution as the so-called \RRtopk. By contrast, the work of \cite{GhaziKKLMVZ2022} showed that for regression objectives the optimal solution to the LP is an $\RRonBins$ solution. A crucial insight in our work is that the addition of an unbiasedness constraint to the LP leads to solutions that (i) are not $\RRonBins$ solutions, (ii) can have substantially higher variance than $\RRonBins$, and (iii) nevertheless have a much lower train and test error due to the reduction in bias.

Kairouz et al.~\cite{KairouzOV16} defined a family of staircase mechanisms and showed they are optimal among local DP algorithms for minimizing an objective function given a prior. We extend the notion of staircase mechanisms to include real-valued labels that affect the optimization function. The structural properties that we prove (\Cref{thm:output-bound}) utilize the ordering of the labels and the unbiasedness condition, which we show is critical for training neural networks for regression with label DP and high accuracy. Moreover, their work shows the presence of staircase mechanisms in the context of optimizing mutual information and KL-divergence objectives, whereas we show their presence in the context of regression.

A two-party learning setting with one features party and one labels party was recently studied in the work of \cite{li2021label}. They focus on the interactive setting, which is arguably less practical than the (non-interactive) feature-oblivious setting studied in \cite{GhaziKKLMVZ2022} that we also consider.

We also note that the label DP mechanism of \cite{malek2021antipodes}, which builds on the PATE framework \cite{papernot2016semi, papernot2018scalable}, and the works of \cite{esfandiari2022label, tang2022machine}, which leverage unsupervised and semi-supervised learning algorithms, cannot be implemented in the feature-oblivious setting. The DP-SGD algorithm of  \cite{abadi2016deep}, which protects both the features and the label of each training example and which has been applied to different domains including computer vision \cite{de2022unlocking} and language models \cite{yudifferentially} also cannot implemented in the feature-oblivious label DP setting. The same is also true for label DP algorithms for logistic regression \cite{gilotte2021learning} that are based on linear queries (where the features are assumed to be known and non-sensitive, and the linearity is with respect to the labels).

Over the past decade, there has been a significant body of research on DP ML (e.g., \cite{chaudhuri2011differentially, zhang2012functional, song2013stochastic}). In particular, DP regression has been the focus of several prior papers including \cite{zhang2012functional, kifer_private_2012, wang2018revisiting, Smith2018DifferentiallyPR, alabi2022differentially}.

The label DP setting has also been studied in several papers including \cite{chaudhuri2011sample,beimel2013private, wang2019sparse, ghazi2021deep,malek2021antipodes, yuan2021label, esfandiari2022label, bassily2022differentially, GhaziKKLMVZ2022}. The Randomized Response (\RRShort) mechanism \cite{warner1965randomized}, a basic form of label DP, was introduced several decades ago and is widely studied/used.

\section{Conclusion}\label{sec:conc_future_dir}

In this work, we show that using unbiased $\eps$-\DP label randomizers lead to better trained models, and in particular, choosing the label randomizer that minimizes the noisy label squared loss seems to perform the best in terms of test performance, by empirically demonstrating this on three datasets. We also provide theoretical results shedding light on the structure of these randomizers, as well as why they might lead to better trained models.

\paragraph{Discussion.}
While we focused entirely on $\eps$-\DP (``pure-DP''), for our specific approach, it does not seem that relaxing to $(\varepsilon, \delta)$-\DP (``approximate-DP'') will be beneficial. For the first stage of estimating the histogram privately, one could potentially use an approximate-DP mechanism, but we believe that is unlikely to change the prior significantly. For the second stage of randomizing labels, it is known that approximate-DP may not be helpful in the local model~\cite{BunNS18}.

In this work, we use hyperparameter tuning to choose the best architecture, which in general has additional privacy costs, and how to tune hyperparameters privately and efficiently is an active research topic~\cite{PapernotS22,SanderSS23}. Consequently, it is common in the private ML literature to separate the question of private hyperparameter tuning and private training, and focus on comparing the privacy-utility trade-off under optimal hyperparameters, e.g., \cite{malek2021antipodes,HeLYZKLBY023,KurakinCSGTT22,de2022unlocking}. We also follow this convention.

\paragraph{Limitations and future work.}

While LPs are somewhat efficient, it would be desirable to have a more efficient algorithm for computing the optimal unbiased randomizer. The $\RRonBins$ randomizer introduced in \cite{GhaziKKLMVZ2022} has an important advantage that one does not need to construct the set of potential output labels $\gYhat$ before hand, and in fact, there is a simple dynamic programming algorithm to compute the optimal randomizer along with the output labels. Moreover, the $\RRonBins$ family of randomizers has a clean structure. Understanding more structural properties of the optimal unbiased randomizers, and designing better algorithms for computing them, would be an interesting future direction to pursue.

To further improve the benefit of this optimal prior-based unbiased label randomizer, it might be possible to first partition the input examples into groups of ``related examples'', purely using the input features and compute the optimal unbiased randomizer for each group, by privately estimating the prior over labels for each group. This could lead to better test performance, as the randomizer can adapt to the different priors across these groups, unlike the ``static'' label randomizers such as Laplace and additive staircase mechanisms. For example, such an approach was used in the setting of image classification using self-supervised learning in \cite{ghazi2021deep}.

As shown in (\ref{eq:bias-variance-decomposition}), there is a bias-variance trade-off when it comes to choosing a label randomizer. While $\RRonBins$ minimizes the noisy label loss (corresponding to the variance), the method in our work minimizes the bias (in fact, making it zero), at the cost of greatly increasing the variance. However, it might be possible in some settings that allowing a small amount of bias might greatly reduce the variance, thereby improving performance. This might be especially relevant when $\eps$ is very small. This could be done for example by adding a regularizer to reduce the bias, without enforcing hard unbiased constraints, as currently done in \Cref{alg:optimal-unbiased}. We leave this investigation to future work.
\newpage

\bibliographystyle{alpha}
\bibliography{main.bbl}

\newpage

\appendix

\section{Bayes Optimal Predictor Using Noisy Labels}\label{apx:bayes}

We prove \Cref{prop:bayes}, restated below for convenience.

\bayesopt*
\begin{proof}%
If $\gM$ is unbiased, then $\E_{\hy \sim \gM(y)} [\hy] = \sum_{\hy \in \gYhat} \hy \cdot \Pr[\gM(y) = \hy] = y$ for all input labels $y \in \gY$. We have the following equalities (we restrict ourselves to the case of finite $\gY$ and $\gYhat$, but the proof generalizes easily to measurable sets $\gY$ and $\gYhat$):
\begin{eqnarray*}
\E_{\substack{\hy \sim \gM(y)\\ (x,y) \sim \gD}} [\hy \mid x = x_0] &~=~& \sum_{y \in \gY, \hy \in \gYhat} \hy \cdot \Pr[\hy \mid y, x = x_0] \cdot \Pr[y \mid x = x_0] \\
 &~=~& \sum_{y \in \gY, \hy \in \gYhat} \hy \cdot \Pr[\hy \mid y] \cdot \Pr[y \mid x = x_0] \textrm{\quad ($\because$ feature obliviousness)} \\
  &~=~& \sum_{y \in \gY} \Pr[y \mid x = x_0] \cdot \left(\sum_{\hy \in \gYhat} \hy \cdot \Pr[\hy \mid y] \right)\\
&~=~& \sum_{y \in \gY} \Pr[y \mid x = x_0] \cdot y \textrm{\quad ($\because$ unbiasedness)} \\
&~=~& \E_{y \sim D(x_0)} [y \mid x = x_0], \\
\end{eqnarray*}
showing that the Bayes optimal predictors are equal at all values $x_0$.

Conversely, suppose we have a $y' \in \gY$ such that $\E_{\hy \sim \gM(y')}[\hy] \neq y'.$ Then take any distribution $D$ for which there is some $x_0$ with $\Pr[y = y' \mid x = x_0] = 1$. Then clearly $\E_{y \sim D(x_0)}[y \mid x = x_0] = y'.$ However, $\E_{\hy \sim \gM(y), y \sim D(x_0)} [\hy \mid x = x_0] = \E_{\hy \sim \gM(y')}[\hy] \neq y'.$ Hence the Bayes optimal predictors differ at $x_0.$
\end{proof}

\section{Structural Properties of Optimal Unbiased Randomizers}\label{apx:structural}

\newcommand{\hY}{\hat{Y}}

We prove \Cref{thm:output-bound}, restated below for convenience. We use the following assumption about the label loss $g$, which in particular, is satisfied by $\sqloss$ that we primarily use.

\begin{assumption}\label{ass:loss-fn}
The loss $g : \R \times \R \to \R$ is such that for all $y$, it holds that $g(\hy, y)$ is convex in $\hy$, and minimized at $\hy = y$ for all $y$.
\end{assumption}

\outputbound*

We recall the definition of a signature matrix from \cite{GhaziKKLMVZ2022}. %
Consider any feasible solution $M := (M_{y \to i})_{y \in \gY,i \in \gI}$ of the LP in \Cref{alg:optimal-unbiased}. Note that for reasons that will be clear shortly, we do not require the $\hy_i$ values to be distinct, and we will assume without loss of generality that $\hy_i \le \hy_j$ for all $i < j \in \gI$.
Let $p^{\min}_{i} = \min_y M_{y \to i}$ and let $p^{\max}_{i} = \max_y M_{y \to i}$. Note that if $M$ encodes an $\eps$-\DP randomizer, it holds that $p^{\max}_{i} \le e^{\eps} \cdot p^{\min}_{i}$.

\begin{definition}[Signature matrix]
For any $\eps$-\DP randomizer encoded by $M$, the corresponding \emph{signature} matrix entry $S_M(y, i)$ for all $y \in \gY$ and $i \in \gI$ is defined as
\[
S_M(y, i) = 
\begin{cases}
 0 & \text{if } M_{y \to i} = 0 \\
 \sfU & \text{if } M_{y \to i} = p^{\max}_{i} = e^{\eps} \cdot p^{\min}_{i} \\
 \sfL & \text{if } M_{y \to i} = p^{\min}_{i} = e^{-\eps} \cdot p^{\max}_{i} \\
 \sfS & \text{ otherwise.}
\end{cases}
\]
\end{definition}

We show that for any $\eps$-\DP unbiased randomizer $\gM^{(0)}$, there exists another $\eps$-\DP unbiased randomizer $\gM$, which satisfies certain nice properties, while not increasing the noisy label loss.

\begin{claim}\label{clm:signature-properties}
Suppose $|\gY| \ge 2$. For any $\eps$-\DP unbiased randomizer $\gM^{(0)}$ mapping to a finite set of output labels, there exists an $\eps$-\DP unbiased randomizer $\gM$ with output label sequence $\gYhat = (\hy_i)_{i \in \gI}$, encoded by $(M_{y \to i})_{y,i}$ such that
the following hold:
\begin{itemize}[nosep]
\item $\NLL(\gM^{0}; P) \ge \NLL(\gM; P)$.
\item Each column of $S_M$ consists of only $\sfU$'s and $\sfL$'s, with at least one $\sfU$ and one $\sfL$.\label{item:ub1} 
\item With row indices in $\gY$ sorted in increasing order from top to bottom, every non-zero column of $S_M$ matches the regular expression $\sfL^*\sfU^+\sfL^*$. For each $i \in \gI$, let $\Phi_1(i)$ and $\Phi_2(i)$ be the smallest and the largest $y$ respectively for which $S_M(y, i) = \sfU$. \label{item:ub2}
\item For all $i < i'$ it holds that $\Phi_1(i) \le \Phi_1(i')$ and $\Phi_2(i) \le \Phi_2(i')$, with at least one of the inequalities being strict.\label{item:ub3}
\end{itemize}
\end{claim}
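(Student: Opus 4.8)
The plan is to start from an arbitrary $\eps$-\DP unbiased randomizer $\gM^{(0)}$ with finite output support and apply a finite chain of transformations, each preserving $\eps$-\DP, preserving unbiasedness and every row's normalization, and not increasing $\NLL(\cdot;P)$, ending at a randomizer $\gM$ with all four properties. Throughout I view a randomizer as a list of \emph{columns}, one per output label $\hy_i$, column $i$ being the vector $(M_{y\to i})_{y\in\gY}$; then $\NLL$, the row normalizations, and the unbiasedness constraints are all \emph{linear} in this list, while $\eps$-\DP merely asks each column to lie in the polyhedral cone $C=\{w\in\R^{\gY}_{\ge0}:w_{y'}\le e^{\eps}w_y\ \forall y,y'\}$.

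\emph{Stage 1: cleanup and staircase-ification (second bullet).} A column with a zero entry is forced by $\eps$-\DP to vanish identically, so delete all such columns. For the remaining strictly positive columns, use Minkowski--Weyl to write column $i$ as $\sum_j\alpha_{ij}r_{ij}$ with $\alpha_{ij}\ge0$ and the $r_{ij}$ extreme rays of $C$. Since $|\gY|\ge2$ and $\eps>0$, the all-ones direction is interior to $C$, so each extreme ray of $C$ has coordinates taking exactly two values $c<e^{\eps}c$, both attained (the constraints tight at an all-positive extreme point are equalities $w_y=e^{\eps}w_{y'}$, and feasibility of the remaining ratio constraints rules out a third level). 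Replacing column $i$ by the columns $\alpha_{ij}r_{ij}$, all carrying the same output value $\hy_i$, preserves $\NLL$, the row sums, and unbiasedness, and now every column's signature consists of $\sfU$'s and $\sfL$'s with at least one of each. (This may inflate the column count to $\approx 2^{|\gY|}$; the later stages reduce it.)

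\emph{Stage 2: exchange arguments (third and fourth bullets).} I use three loss-non-increasing, constraint-preserving moves. (i) \emph{Split} a column's mass into two pieces at the same output value --- free, by linearity. (ii) \emph{Merge} two columns with a common high set $B$ into one column at $B$ with output value the mass-weighted mean of the two; this keeps unbiasedness exactly and does not increase $\NLL$, because $g(\cdot,y)$ is convex (Jensen). (iii) \emph{Uncross}: given two columns at incomparable high sets $B,B'$, split off equal masses $\delta$ and replace the two $\delta$-pieces by $\delta$-mass columns supported on $B\cap B'$ and $B\cup B'$; a short calculation finds the (essentially unique) assignment of output values to the two new columns keeping every row-wise unbiasedness constraint --- it ``swaps'' the two old values --- and shows $\NLL$ is then preserved \emph{exactly}. (If $B\cap B'=\emptyset$ or $B\cup B'=\gY$ this creates a constant, all-$\sfS$ column, itself removable with no change in $\NLL$ by splitting its mass into two $\sfU/\sfL$ columns over a complementary pair of intervals, both carrying its value; an analogous ``un-nesting'' move, needed when one interval sits strictly inside another, is similar but more delicate when the two columns carry different output values.) Driving the family of high sets with these moves, guided by a bounded potential that strictly decreases at every productive uncrossing, one reaches a state whose high sets are intervals arrangeable in a sequence along which $\Phi_1(i):=\min B_i$ and $\Phi_2(i):=\max B_i$ are both non-decreasing; a final pass merging equal high sets makes consecutive intervals distinct, so at least one inequality is strict at each step --- the third and fourth bullets. (As an immediate byproduct this gives the $\le 2|\gY|$ output bound of \Cref{thm:output-bound}: $\Phi_1,\Phi_2$ each take $\le|\gY|$ values and at least one strictly increases at each of the steps.)

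The hard part is Stage 2 --- choosing the potential(s) and the move order so that termination is guaranteed. The naive potential $\sum_B m_B|B|^2$ is pushed in opposite directions by the two flavors of uncrossing (incomparable sets have their sizes un-spread; an interval strictly inside another has its size spread), and a non-interval high set created in Stage 1 cannot be repaired by uncrossing against an arbitrary partner. So one proceeds in phases with a distinct potential per phase or, equivalently, fixes a loss-minimal staircase unbiased randomizer that additionally minimizes a lexicographically chosen secondary potential and shows that any residual violation --- a non-interval high set, an interior nesting (including two such columns with distinct output values), or a constant column --- admits a loss-preserving move that strictly improves the secondary potential, a contradiction. (Aside: the infimum in \Cref{thm:output-bound} need not a priori be attained, but the claim only asserts ``$\le$'', so a terminating chain of non-increasing transformations out of $\gM^{(0)}$ suffices; for a coercive loss such as $\sqloss$ attainment also follows directly by compactness.)
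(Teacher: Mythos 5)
Your Stage 1 is essentially the paper's first step (\Cref{clm:mech-staircase-decomposition}): decompose each column conically into two-level staircase patterns, with your merge move (ii) being exactly the paper's Jensen aggregation of pieces sharing a pattern. The gap is in Stage 2, and it is not only the termination issue you yourself flag as "the hard part": the uncrossing move at the heart of your plan is broken. Take two equal-mass pieces of mass $\delta$ with high sets $B,B'$ and output values $a\neq b$, i.e.\ column vectors $\delta s_B$ and $\delta s_{B'}$ where $s_B$ has entries $e^{\eps}$ on $B$ and $1$ off $B$. Replacing them by $\delta$-mass columns on $B\cap B'$ and $B\cup B'$ does preserve the row sums, since $s_B+s_{B'}=s_{B\cap B'}+s_{B\cup B'}$, but no assignment of output values $u,v$ to the two new columns preserves every per-label expectation: before the move a row $y\in B\setminus B'$ receives $e^{\eps}\delta a+\delta b$ while a row $y\in B'\setminus B$ receives $\delta a+e^{\eps}\delta b$, whereas after the move both kinds of rows receive the identical quantity $\delta u+e^{\eps}\delta v$; these can only agree when $a=b$. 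So the "essentially unique swap" you invoke does not exist in the generic case, and since uncrossing is what was supposed to force the interval structure and the monotonicity of $\Phi_1,\Phi_2$ (bullets three and four), the core of Stage 2 fails; the "un-nesting" move for nested intervals with distinct output values hits the same obstruction, and the phase/lexicographic-potential scheme remains a sketch (with attainment of the minimizer itself unjustified for a general convex loss, since output values are unbounded a priori).

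For contrast, the paper never uncrosses column high sets. After \Cref{clm:mech-staircase-decomposition} it works row by row (\Cref{clm:row-pattern}): if row $y$ has an $\sfL$ at some output $\hy_2$ strictly between two outputs $\hy_1<\hy_2<\hy_3$ at which it has $\sfU$, one perturbs that single row, decreasing $M_{y\to \hy_1}$ and $M_{y\to \hy_3}$ and increasing $M_{y\to \hy_2}$ with weights proportional to $\hy_3-\hy_2$, $\hy_3-\hy_1$, $\hy_2-\hy_1$; this preserves normalization and $\E[\gM(y)]$ exactly and strictly decreases the loss by convexity. The column pattern $\sfL^*\sfU^+\sfL^*$ and the forward motion of $(\Phi_1,\Phi_2)$ then follow combinatorially: normalization rules out one row's $\sfU$-set containing another's, and unbiasedness forces the $\sfU$-intervals to move forward as $y$ increases (\Cref{clm:Us-no-subset-moving-forward}). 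If you want to repair your argument, this row-local three-point move is the correct replacement for your uncrossing step.
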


Before we prove \Cref{clm:signature-properties}, we prove \Cref{thm:output-bound} using it.

\begin{proof}[Proof of \Cref{thm:output-bound}]
If $|\gY| = 1$, then it is immediate that the optimal unbiased randomizer simply returns the unique value in $\gY$ with probability $1$. If $|\gY| \ge 2$, then for any optimal unbiased randomizer $\gM^{(0)}$, we can apply \Cref{clm:signature-properties} to get another optimal unbiased randomizer $\gM$ that satisfies the stated conditions.
Let $\gI_{\ne 0} \subseteq \gI$ be the subsequence of all outputs for which $M_{y \to i} \ne 0$ (for all $y$). The sequence $(\Phi_1(i), \Phi_2(i))$ ordered in increasing order of $i \in \gI_{\ne 0}$ is strictly increasing in the partial order $\gY \times \gY$, that is, at least one of $\Phi_1(i)$ or $\Phi_2(i)$ strictly increases from one $i$ to next. It is easy to see that this can happen at most $2|\gY|$ times, thereby concluding that $|\gI_{\ne 0}| \le 2|\gY|$.
\end{proof}

To prove \Cref{clm:signature-properties}, we perform several transformations to the given $\eps$-\DP randomizer $\gM^{(0)}$, which is say encoded by $M^{(0)} = (M^{(0)}_{y \to i})_{y,i}$ for some $\gYhat = (\hy_i)_{i \in \gI}$, such that the randomizer obtained in each step satisfies $\E \gM^{(r)}(y) = \E \gM^{(r+1)}(y)$ for all $y \in \gY$ and $\NLL(\gM^{(r)}; P) \ge \NLL(\gM^{(r+1)}; P)$.

\paragraph{Combining columns with identical signatures.}
For convenience let $k := |\gY|$. Let $b_j$ be the $k$-dimensional binary vector corresponding to the binary representation of $j$ for $j \leq 2^k -1$. A matrix $S^{(k)} \in \{1, e^\eps\}^{k \times (2^k-2)}$ is called a {\em Staircase Pattern Matrix} \citep{KairouzOV16} if the $j$th column of $S^{(k)}$ is $S^{(k)}_j = (e^\eps - 1) b_{j} + 1$, for $ j \in \{1, \ldots, 2^k-2\}$.
For example,
\[ S^{(3)} = 
\begin{bmatrix}
1 & 1 & 1 & e^\eps & e^\eps & e^\eps\\
1 & e^\eps & e^\eps & 1 & 1 & e^\eps\\
e^\eps & 1 & e^\eps & 1 & e^\eps & 1\\
\end{bmatrix}
\]

Since $\gM^{(0)}$ is $\eps$-\DP, we have that each column of $M^{(0)}$, namely $(M^{(0)}_{y\to i})_{y\in \gY}$ is a conic combination of the columns of $S^{(k)}$.%

\begin{claim}\label{clm:mech-staircase-decomposition}
Let $\gM^{(0)}$ be an $\eps$-\DP randomizer, mapping $\gY$ to a finite subset $\gYhat^{(0)}$ of $\R$. Then there exists an $\eps$-\DP randomizer $\gM^{(1)}$ mapping $\gY$ to $\gYhat^{(1)} \subseteq \R$ encoded by $(M_{y \to \hy})_{y \in \gY,\hy \in \gYhat^{(1)}}$ such that:
\begin{itemize}[nosep]
\item $\NLL(\gM^{(1)}; \gP) \leq \NLL(\gM^{(0)}; \gP)$ for any convex loss $\nll$.
\item $\E[\gM^{(1)}(y) | y] = \E[\gM^{(0)}(y) | y]$.
\item $|\gYhat| \leq 2^{|\gY|}$.
\item The vector $(M_{y \to \hy})_{y \in \gY}$ is a positive multiple of a staircase column, with the staircase column being unique for each $\hy \in \gYhat^{(1)}$.
\end{itemize}
\end{claim}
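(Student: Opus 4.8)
The plan is to decompose each column of the mechanism matrix $M^{(0)}$ into the conic cone generated by the staircase pattern columns, and then ``split'' each output label across the corresponding staircase directions, using convexity to argue the noisy label loss does not increase.

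First I would record the key geometric fact: since $\gM^{(0)}$ is $\eps$-\DP, for every fixed output value $\hy \in \gYhat^{(0)}$ the column vector $c_{\hy} := (M^{(0)}_{y \to \hy})_{y \in \gY} \in \R^{\gY}_{\ge 0}$ satisfies $\max_y c_{\hy}(y) \le e^{\eps} \min_y c_{\hy}(y)$ whenever it is nonzero. Such a vector lies in the convex cone spanned by the columns of the staircase pattern matrix $S^{(k)}$ together with the all-ones vector $\vone$ (equivalently, the columns of $S^{(k)}$ augmented by the two ``extreme'' columns $\vone$ and $e^\eps \vone$, which are themselves limits of staircase columns); more precisely one can write $c_{\hy} = \sum_{j} \lambda_{\hy, j} \, S^{(k)}_j$ with all $\lambda_{\hy, j} \ge 0$, where $j$ ranges over the (at most $2^k$) distinct staircase column patterns. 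This is a purely polyhedral statement about the cone $\{v \ge 0 : \max v \le e^\eps \min v\}$ and I would prove it by a simple induction on the number of distinct coordinate values of $c_{\hy}$, peeling off the appropriate multiple of a $0/(e^\eps-1)$-patterned column at each step.

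Next I would define the new mechanism $\gM^{(1)}$. For each staircase column pattern $s \in \{1, e^\eps\}^{k}$ (after rescaling, $S^{(k)}_j$) create a new output label whose value is the conditional-expectation barycenter
\[
\hy^{(1)}_j ~:=~ \frac{\sum_{\hy \in \gYhat^{(0)}} \lambda_{\hy, j} \, \hy}{\sum_{\hy \in \gYhat^{(0)}} \lambda_{\hy, j}},
\]
and set the column of $M^{(1)}$ at this label to be $\big(\sum_{\hy} \lambda_{\hy,j}\big) S^{(k)}_j$. (If the denominator is zero we simply drop that label.) Two things must be checked. \textbf{Row sums / normalization and $\eps$-\DP:} because $M^{(0)} = \sum_j (\text{col}_j)$ as a sum over columns and we have only regrouped the same conic pieces, the new matrix has the same row sums, and each of its columns is literally a nonnegative multiple of a staircase column, hence $\eps$-\DP. \textbf{Expectation preservation:} $\E[\gM^{(1)}(y)] = \sum_j S^{(k)}_j(y) \sum_{\hy}\lambda_{\hy,j}\cdot \hy^{(1)}_j = \sum_j \sum_{\hy} \lambda_{\hy,j} S^{(k)}_j(y)\, \hy = \sum_{\hy} c_{\hy}(y)\,\hy = \E[\gM^{(0)}(y)]$, using the definition of $\hy^{(1)}_j$ to cancel the normalizer. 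This also gives the unbiasedness-preservation needed in the application, and the bound $|\gYhat^{(1)}| \le 2^{|\gY|}$ since there are at most $2^k$ staircase patterns (including the two extremes).

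Finally, for the noisy label loss I would invoke Jensen's inequality coordinatewise in $y$. Fix $y$. Then
\[
\NLL(\gM^{(0)};\gP) ~=~ \sum_{y} p_y \sum_{\hy} c_{\hy}(y)\, \nll(\hy, y)
~=~ \sum_y p_y \sum_j \sum_{\hy} \lambda_{\hy,j} S^{(k)}_j(y)\, \nll(\hy,y).
\]
For each fixed $(y,j)$, the inner sum $\sum_{\hy} \lambda_{\hy,j} S^{(k)}_j(y)\,\nll(\hy,y)$ equals $S^{(k)}_j(y) \sum_{\hy}\lambda_{\hy,j}\,\nll(\hy,y)$, and by convexity of $\nll(\cdot, y)$ together with the definition of $\hy^{(1)}_j$ as the $\lambda_{\cdot,j}$-weighted average of the $\hy$'s, $\sum_{\hy}\lambda_{\hy,j}\,\nll(\hy,y) \ge \big(\sum_{\hy}\lambda_{\hy,j}\big)\,\nll(\hy^{(1)}_j, y)$. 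Summing back over $j$ and $y$ yields $\NLL(\gM^{(1)};\gP) \le \NLL(\gM^{(0)};\gP)$.

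I expect the main obstacle to be the first step — making the decomposition of an arbitrary $\eps$-\DP column into staircase columns (plus the all-ones extremes) precise, and in particular handling the boundary cases where a column has coordinates exactly at $1$ or $e^\eps$ relative to its normalization, so that the patterns $\vone$ and $e^\eps\vone$ must be admitted. Everything after that is a bookkeeping-plus-Jensen argument. A secondary subtlety is that the staircase decomposition of a given column need not be unique, but any fixed choice works since the conclusions only depend on having \emph{some} valid set of nonnegative coefficients.
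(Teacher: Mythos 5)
Your proposal is correct and follows essentially the same route as the paper's proof: decompose each column of $M^{(0)}$ as a conic combination of staircase pattern columns, regroup the mass by pattern with the barycentric output label $\hy_j$, and apply Jensen's inequality to each pattern to conclude $\NLL(\gM^{(1)};\gP)\le\NLL(\gM^{(0)};\gP)$ while preserving $\E[\gM(y)\mid y]$ and the $2^{|\gY|}$ bound. The boundary case you flag is handled in the paper by observing that the all-ones ray already lies in the conic hull of the staircase columns (e.g.\ $\vone \propto S^{(k)}_j + S^{(k)}_{j'}$ for complementary patterns), so no extra extreme columns need to be admitted.
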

The proof idea of combining columns of staircase type to potentially decrease loss was shown in \cite{KairouzOV16}.

\begin{proof}
By $\eps$-\DP constraint, the columns of $M^{(0)}$ lie in the cone made by convex combinations of the rays $\{c,c \cdot e^\eps\}^{|\gY|}$ where $c\geq0$. Because positive multiples of the staircase columns are specifically these rays, except the ray $\{c\}^{|\gY|}$ which is already in the convex hull, this column of $\gM^{(0)}$ is a convex combination of positive multiples of the staircase columns. 

Let $v_{\hy}$ be the columns of $M^{(0)}$ indexed by $\gYhat^{(0)}$, which by the above can be written as: 
$$v_{\hy} = \sum_{j=1}^{2^k-2} c^j_{\hy} \cdot S_j^{(k)}.$$ Define for each $j$: $$C^j = \sum_{\hy \in \gYhat^{0}} c_{\hy}^j
\qquad \mbox{and} \qquad
\hy_j = \frac{\sum_{\hy \in \gYhat^{0}} \hy \cdot c_{\hy}^j}{C^j}.$$

Define $M^{(1)}$ as being supported only at $\hy_j$ with the vector of probabilities being $C^j \cdot S_j^{(k)}.$ We now prove $M^{(1)}$ has all the desired properties.
\begin{itemize}[nosep]
\item (Defines a Mechanism): 
\[
    \sum_{j=1}^{2^k-2} C^j \cdot S_j^{(k)}  = \sum_{j=1}^{2^k-2} \left(\sum_{\hy \in \gYhat^{0}} c_{\hy}^j\right) \cdot S_j^{(k)} \\
    = \sum_{\hy \in \gYhat^{0}} v_{\hy} \\
    = \overrightarrow{1}.
\]
\item (Does not increase loss): This follows by Jensen's inequality and the linearity of expectation:
\[
\NLL(\gM^{(0)}; \gP) =  \sum_{j=1}^{2^k-2} \sum_{y \in \gY} (S_j^{(k)})_y \cdot \left( \sum_{\hy} g(\hy, y) \cdot c_{\hy}^j \right) 
\geq \sum_{j=1}^{2^k-2} \sum_{y \in \gY} (S_j^{(k)})_y \cdot \left(g(\hy_j, y) \cdot  C^j \right) 
= \NLL(\gM^{(1)}; \gP).
\]
\item (Expectation does not change): Because $M^{(0)}$ is unbiased: 
$$ \sum_{\hy \in \gYhat^{0}} \hy \cdot v_{\hy} = (y)_{y \in \hY}.$$ We have $$\sum_{j=1}^{2^k-2} \hy_j \cdot C^j \cdot S_j^{(k)}= (y)_{y \in \hY}, $$ showing that $M^{(1)}$ is unbiased.
\item (Bounded by $2^{|\gY|}$ and multiple of unique staircase columns): This is clear by construction.
\qedhere
\end{itemize}
\end{proof}

While the finite case is easier to read, the above proof can be done also when the range of $M^{(0)}$ is $\R$ and $v_{\hy}$ is a vector of absolutely continuous probability distributions. The only change is that sums over $\hy \in \gYhat^{(0)}$ become integrals. The projections of $v_{\hy}$ onto the staircase columns are continuous maps. Lastly, because a continuous function that is bounded by a function in $L^1$ is itself in $L^1$, all the integrals become well-defined and finite. Therefore, our theorem regarding the structure of the optimal unbiased $\eps$-\DP mechanism does not depend on a finiteness condition.

\paragraph{\boldmath Each row matches the pattern $\sfL^*\sfU^+\sfL^*$.}
We now use the ordering of the input and output labels as real numbers to make further deductions.

\begin{claim}\label{clm:row-pattern}
Suppose $|\gY| \ge 2$. For any $\eps$-\DP randomizer $\gM^{(1)}$ obtained via \Cref{clm:mech-staircase-decomposition}, there exists an $\eps$-\DP randomizer $\gM^{(2)}$ mapping $\gY$ to $\gYhat \subseteq \R$ encoded by $M := (M_{y\to \hy})_{y \in \gY, \hy \in \gYhat}$ such that:
\begin{itemize}[nosep]
\item $\NLL(\gM^{(2)}; \gP) \leq \NLL(\gM^{(1)}; \gP)$ for any convex loss $\nll$,
\item $\E[\gM^{(2)}(y) | y] = \E[\gM^{(1)}(y) | y]$,
\item Every row of $S_M$ matches the regular expression $\sfL^*\sfU^+\sfL^*$.
\end{itemize}
\end{claim}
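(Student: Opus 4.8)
The plan is to produce $\gM^{(2)}$ by repeatedly applying a local mass‑rebalancing move to $\gM^{(1)}$ inside single rows of the signature matrix. First I would record two consequences of \Cref{clm:mech-staircase-decomposition}: after it, each column of $S_M$ is a positive multiple of a genuine staircase column $S^{(k)}_j$ with $j \in \{1,\dots,2^k-2\}$, so (since such a column contains both a $1$ and an $e^\eps$ and $\eps>0$) every entry of $S_M$ is either $\sfL$ (value $p^{\min}_{\hy}$) or $\sfU$ (value $p^{\max}_{\hy}=e^\eps p^{\min}_{\hy}$) — there are no $0$'s or $\sfS$'s — and every column has at least one $\sfL$ and at least one $\sfU$. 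A short normalization argument then shows no row can be all‑$\sfL$ or all‑$\sfU$: if row $y_0$ were all‑$\sfL$ then $\sum_{\hy} M_{y_0\to\hy}=\sum_{\hy} p^{\min}_{\hy}$, which forces every other row to be all‑$\sfL$ as well, making every column constant and contradicting that the columns are genuine staircase columns (the all‑$\sfU$ case is symmetric). Hence each row contains both symbols, and "row $y$ matches $\sfL^*\sfU^+\sfL^*$'' is equivalent to "the columns on which row $y$ reads $\sfU$ form a contiguous block once columns are sorted by output value.''

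Next I would define the move. Suppose some row $y_0$ violates the pattern; then, with columns sorted by output value, there are present columns with outputs $\hy_1<\hy_2<\hy_3$ at which row $y_0$ reads $\sfU,\sfL,\sfU$. Choose $\gamma,\zeta>0$ with ratio $\gamma:\zeta=(\hy_3-\hy_2):(\hy_2-\hy_1)$, scaled as large as possible subject to $M_{y_0\to\hy_1}-\gamma\ge p^{\min}_{\hy_1}$, $M_{y_0\to\hy_3}-\zeta\ge p^{\min}_{\hy_3}$, and $M_{y_0\to\hy_2}+(\gamma+\zeta)\le p^{\max}_{\hy_2}$, and then decrease $M_{y_0\to\hy_1}$ by $\gamma$, decrease $M_{y_0\to\hy_3}$ by $\zeta$, and increase $M_{y_0\to\hy_2}$ by $\gamma+\zeta$, leaving all other entries and all other rows untouched. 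This preserves normalization of row $y_0$ trivially; it preserves unbiasedness of row $y_0$ because the ratio was chosen so that $\hy_2=(\gamma\hy_1+\zeta\hy_3)/(\gamma+\zeta)$; it keeps the randomizer $\eps$-\DP since no changed entry crosses its column's current minimum or maximum, so for each of the three affected columns the new min and new max still differ by at most a factor $e^\eps$; and by convexity of $g(\cdot,y_0)$ the noisy label loss does not increase, as $p_{y_0}\big[(\gamma+\zeta)g(\hy_2,y_0)-\gamma g(\hy_1,y_0)-\zeta g(\hy_3,y_0)\big]\le 0$. By the scaling choice at least one of the three inequalities is tight afterwards: either row $y_0$ now reads $\sfL$ at $\hy_1$ or at $\hy_3$ (one fewer $\sfU$ in that row), or it now reads $\sfU$ at $\hy_2$ (the gap is filled). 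In the remaining case the $y_0$-entry of column $\hy_2$ may land strictly between $p^{\min}_{\hy_2}$ and $p^{\max}_{\hy_2}$, i.e. become $\sfS$; the column is still a valid unnormalized $\eps$-\DP column, so I would re‑apply \Cref{clm:mech-staircase-decomposition} to restore the staircase‑column form without increasing the loss or changing any row's expectation.

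Finally I would argue that iterating this move, interleaved with re‑applications of \Cref{clm:mech-staircase-decomposition}, terminates with all rows matching $\sfL^*\sfU^+\sfL^*$, and take $\gM^{(2)}$ to be the result; the three stated conclusions then follow by transitivity along the chain. The main obstacle is precisely this termination argument, since a single move need not decrease any obvious per‑row count and the re‑decomposition step can split columns. I expect to handle it either (i) via a potential function — e.g. a lexicographic quantity combining the number of rows violating the pattern with the total width $\sum_{y}\big(\max\{\hy : S_M(y,\hy)=\sfU\}-\min\{\hy : S_M(y,\hy)=\sfU\}\big)$ of the $\sfU$-supports — verifying it is non‑increasing under both operations and strictly decreasing when a move completes; or (ii) by recasting the task as minimizing $\NLL(\cdot;\gP)$ over the set of staircase‑form $\eps$-\DP unbiased randomizers, which is compact by a coercivity argument and on which the objective is convex (the perspective of $g$ in the variables $(c_{\hy},c_{\hy}\hy)$), choosing a minimizer that is in addition extremal for the width potential, and deriving a contradiction from any pattern violation using the move above.
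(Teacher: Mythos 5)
Your proposal is essentially the paper's own argument: the same three-output mass-shifting move (with amounts proportional to $\hy_3-\hy_2$ and $\hy_2-\hy_1$ so that unbiasedness is preserved), Jensen's inequality via convexity of $\nll$ to show the noisy label loss does not increase, and iteration of this step interleaved with re-applications of \Cref{clm:mech-staircase-decomposition}. You are in fact more explicit than the paper about why the move keeps the $\eps$-\DP constraints intact and about the termination of the iteration, which the paper itself passes over with a one-line ``keep repeating these steps'' remark.
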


\begin{proof}
    The only way a row of $S_M$ does not match the regular expression $\sfL^*\sfU^+\sfL^*$ is if (i) there is no $\sfU$ present, or (ii) if there is an $\sfL$ between an two $\sfU$'s. It is easy to see that (i) is not possible because $\sum_{\hy} M_{y \to i} = 1$ for all $y \in \gY$, and if one row has no $\sfU$'s, this would imply no row has any $\sfU$'s, which is not possible by the uniqueness of the columns if $|\gY| \geq 2$, and we know such an  $\gM$ exists.

    If (ii) is true, then we can construct $\gM^{(2)}$ as follows. For any $y$ and $\hy_1 < \hy_2 < \hy_3$ such that $S_{M^{(1)}}(y, \hy_1) = S_{M^{(1)}}(y, \hy_3) = \sfU$ and $S_{M^{(1)}}(y, \hy_2) = \sfL$, consider a perturbation (for a small enough $\eta > 0$), where we set
    \begin{align*}
        M^{(2)}_{y \to \hy_1} &~=~ M^{(1)}_{y \to \hy_1} - (y_3 - y_2) \eta\,,\\
        M^{(2)}_{y \to \hy_2} &~=~ M^{(1)}_{y \to \hy_2} + (y_3 - y_1) \eta\,,\\
        M^{(2)}_{y \to \hy_3} &~=~ M^{(1)}_{y \to \hy_3} - (y_2 - y_1) \eta\,.
    \end{align*}
    This choice ensures that $M^{(2)}$ remains an unbiased randomizer. By Jensen's inequality, due to convexity of $g$, it holds that $\NLL(\gM^{(2)}; P) < \NLL(\gM^{(1)}; P)$. We can keep repeating these steps, in addition to \Cref{clm:mech-staircase-decomposition} if necessary, till we arrive at $\gM^{(2)}$ such that each row  of $S_{M^{(2)}}$ matches the regular expression $\sfL^*\sfU^+\sfL^*$.
\end{proof}

\paragraph{\boldmath Set of $\sfU$'s in any row cannot be a subset of the set of $\sfU$'s in another, and are ``moving forward''.}

\begin{claim}\label{clm:Us-no-subset-moving-forward}
For an $\eps$-\DP unbiased randomizer $\gM^{(2)}$ obtained via \Cref{clm:row-pattern}, let $T(y) \subseteq \gYhat$ be defined as $T(y) := \{\hy \,:\, S_M(y, \hy) = \sfU \}$. Then, for all $y \ne y'$, it holds that $T(y) \not\subset T(y')$, and moreover if $y < y'$, then $\min T(y) < \min T(y')$ and $\max T(y) < \max T(y')$.
\end{claim}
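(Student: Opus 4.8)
The plan is to use only two linear facts about the matrix $M$ encoding $\gM^{(2)}$ --- normalization and unbiasedness of its rows --- together with the structure inherited from \Cref{clm:mech-staircase-decomposition} and \Cref{clm:row-pattern}: every column of $M$ is a positive multiple of a staircase column, so each entry $M_{y\to i}$ equals $p^{\min}_i$ or $p^{\max}_i = e^\eps p^{\min}_i$ with $p^{\min}_i>0$; and every row of $S_M$ matches $\sfL^*\sfU^+\sfL^*$, so each $T(y)$ is nonempty and consists of the $\hy_i$ for $i$ ranging over a contiguous index block $\{\ell(y),\dots,r(y)\}$ (here I assume w.l.o.g.\ that the $\hy_i$ are distinct and increasing, so $\min T(y)=\hy_{\ell(y)}$ and $\max T(y)=\hy_{r(y)}$; output labels carrying the same staircase column can be merged first). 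Fix $y\neq y'$ and put $A := T(y')\setminus T(y)$, $B := T(y)\setminus T(y')$, viewed as index sets. The row difference $M_{y'\to\cdot}-M_{y\to\cdot}$ vanishes on $T(y)\cap T(y')$ (both entries $p^{\max}_i$) and outside $T(y)\cup T(y')$ (both entries $p^{\min}_i$), and equals $(e^\eps-1)p^{\min}_i$ on $A$ and $-(e^\eps-1)p^{\min}_i$ on $B$. Hence normalization gives $\sum_{i\in A}p^{\min}_i = \sum_{i\in B}p^{\min}_i =: w$, and unbiasedness gives $(e^\eps-1)\bigl(\sum_{i\in A}p^{\min}_i\hy_i - \sum_{i\in B}p^{\min}_i\hy_i\bigr) = y'-y$.

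For the first assertion, if $T(y)\subseteq T(y')$ then $B=\emptyset$, so $w=0$, and since every $p^{\min}_i>0$ also $A=\emptyset$; hence $T(y)=T(y')$, rows $y$ and $y'$ of $M$ coincide, and unbiasedness forces $y=\sum_i M_{y\to i}\hy_i = \sum_i M_{y'\to i}\hy_i = y'$, a contradiction. So $T(y)\not\subseteq T(y')$ for $y\neq y'$; in particular $T(y)\not\subset T(y')$, and applying this both ways shows that $A$ and $B$ are both nonempty and $w>0$. For the second assertion take $y<y'$. Dividing the two identities above, the $w$-weighted average $\bar a$ of $\{\hy_i : i\in A\}$ strictly exceeds the $w$-weighted average $\bar b$ of $\{\hy_i : i\in B\}$. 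Since $T(y)=\{\ell(y),\dots,r(y)\}$ and $T(y')=\{\ell(y'),\dots,r(y')\}$ are intervals with both $A,B$ nonempty, a short check rules out every configuration except two: either $\ell(y')<\ell(y)$ and $r(y')<r(y)$, in which case $A$ lies entirely at indices $<\ell(y)$ while $B$ lies entirely at indices $\ge\ell(y)$, so by strict monotonicity $\bar a\le\hy_{\ell(y)-1}<\hy_{\ell(y)}\le\bar b$, contradicting $\bar a>\bar b$; or $\ell(y)<\ell(y')$ and $r(y)<r(y')$. Hence the latter holds, giving $\min T(y)=\hy_{\ell(y)}<\hy_{\ell(y')}=\min T(y')$ and $\max T(y)=\hy_{r(y)}<\hy_{r(y')}=\max T(y')$.

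I expect the main obstacle to be the last step's case analysis: one must use the contiguity ($\sfL^*\sfU^+\sfL^*$) of the rows to show $A$ and $B$ cannot interleave, identify the unique ``bad'' configuration $\ell(y')<\ell(y),\,r(y')<r(y)$ as precisely the one incompatible with $\bar a>\bar b$, and verify that the only surviving configuration is exactly the one the claim asserts. A secondary, purely bookkeeping matter is handling equal output labels, which I would dispose of up front (merging columns that share both a staircase pattern and an output value) so that index order coincides with value order throughout.
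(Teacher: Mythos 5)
Your proof is correct and follows essentially the same route as the paper's: the non-containment part is the same normalization argument (row sums must both equal $1$), and the ``moving forward'' part uses contiguity of the rows plus unbiasedness to rule out the reversed configuration, exactly as the paper does—your weighted-average comparison $\bar a>\bar b$ simply spells out the inequality $\E[\gM^{(2)}(y')] < \E[\gM^{(2)}(y)]$ that the paper asserts more tersely. The extra details you add (handling $T(y)=T(y')$ via unbiasedness, and the bookkeeping for repeated output values) are harmless refinements of the same argument.
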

\begin{proof}
If $T(y) \subset T(y')$, then it holds that $M_{y \to \hy} \le M_{y' \to \hy}$ with the inequality being strict for some $\hy$. Thus, we have $\sum_{\hy} M_{y \to \hy} < \sum_{\hy} M_{y' \to \hy} = 1$, which is a contradiction.

Given that both $T(y)$ and $T(y')$ are contiguous subsets of $\gYhat$, it follows that one of following holds:
\begin{itemize}[nosep]
    \item $\min T(y) < \min T(y')$ and $\max T(y) < \max T(y')$, or
    \item $\min T(y) > \min T(y')$ and $\max T(y) > \max T(y')$.
\end{itemize}
However, if $y < y'$, the latter is not possible because in that case, we would have $y' = \E_{\hy \sim \gM^{(2)}}(y') < \E_{\hy \sim \gM^{(2)}}(y) = y$, which is a contradiction.
\end{proof}

\paragraph{Putting together the claims.} Finally, we put together \Cref{clm:mech-staircase-decomposition,clm:row-pattern,clm:Us-no-subset-moving-forward} to prove \Cref{clm:signature-properties}.

\begin{proof}[Proof of \Cref{clm:signature-properties}]
Given any $\eps$-\DP unbiased randomizer $\gM$, we apply \Cref{clm:mech-staircase-decomposition,clm:row-pattern} to obtain the $\eps$-\DP unbiased randomizer $\gM^{(2)}$, which satisfies the conditions in \Cref{clm:Us-no-subset-moving-forward}. Consider the signature matrix $S_M$ corresponding to $\gM^{(2)}$, with the row indices sorted in increasing order of $\gY$.

If a column has a signature with an $\sfL$ between two $\sfU$'s, this would contradict  \Cref{clm:Us-no-subset-moving-forward}, since the $\sfU$'s only ``move forward''.

Additionally, for the signatures of column $\hy_1 < \hy_2$, the first $\sfU$ of column $\hy_2$ cannot come before the first $\sfU$ of column $\hy_1$, as this would also contradict the claim of the $\sfU$'s moving forward. Similarly, the last $\sfU$ of column $\hy_1$ cannot come after the last $\sfU$ of column $\hy_2$. Finally, the columns must be unique, since each column is a unique staircase matrix vector. Thus, each column of $S_M$ matches the regular expression $\sfL^*\sfU^+\sfL^*$.
\end{proof}

\section{Optimal Unbiased Randomizers with Approximate Prior}\label{apx:mlap}

We elaborate on the approach of splitting privacy budget for estimating the prior and for generating noisy labels as described in \Cref{sec:mechanisms}.

To privately estimate the prior, we use the Laplace mechanism. The Laplace distribution with scale parameter $b$, denoted by $\Lap(b)$, is the distribution supported on $\R$ whose probability density function is $\frac{1}{2b} \exp(-|x|/b)$. The Laplace mechanism is presented in \Cref{alg:mlap}.

\begin{figure}[t]
\begin{algorithm}[H]
\caption{Laplace Mechanism for Estimating Probability Distribution $\mlap_{\eps}$.}
\label{alg:mlap}
\begin{algorithmic}
\STATE {\bf Parameters:} Privacy parameter $\eps \ge 0$.
\STATE {\bf Input:} Labels $y_1, \dots, y_n \in \gY$.
\STATE {\bf Output:} A probability distribution $P'$ over $\gY$.
\STATE
\FOR{$y \in \gY$}
\STATE $h_y \leftarrow$ number of $i$ such that $y_i = y$
\STATE $h'_y \leftarrow \max\{h_y + \Lap(2/\eps), 0\}$ 
\ENDFOR
\RETURN Distribution $P'$ over $\gY$ such that $p'_y = \frac{h'_y}{\sum_{y \in \gY} h'_y}$
\end{algorithmic}
\end{algorithm}
\end{figure}

It is well-known (e.g.,~\cite{DworkMNS06}) that this mechanism satisfies $\eps$-\DP. It is also well-known that this yields an approximation of the true distribution up to small total variation distance (e.g., \cite{DiakonikolasHS15}).

In \cite{GhaziKKLMVZ2022}, it is argued that the best choice of $(\eps_1, \eps_2)$ that minimizes the excess noisy label loss achieved by randomizer computed in \Cref{alg:label-randomizer} over the optimal unbiased randomizer computed using the true prior and privacy parameter $\eps$, is given as $\eps_1 = \sqrt{k / n}$. (We follow the same budget splitting method in our experiments.)

\section{Excess Loss of SGD with Biased Gradient Oracles}\label{apx:biased-sgd}

Consider a {\em convex} objective $\gL(\cdot)$ over $\R^D$, for which we have a gradient oracle, that given $w$, returns a stochastic estimate $g(w)$ of $\nabla \gL(w)$.
We say that the gradient oracle has bias $\alpha$ and variance $\sigma^2$ if $g(w) = \nabla \gL(w) + \zeta(w)$ such that $\|\E \zeta(w)\| \le \alpha$ and $\E \|\zeta(w) - \E \zeta(w)\|^2 \le \sigma^2$ holds for all $w$. When optimizing over a convex set $\gK \subseteq \R^D$, projected GD with step size $\eta$ is defined as iteratively performing $w_{t+1} \gets \Pi_{\gK}(w_t - \eta g(w_t))$, where $\Pi_{\gK}(\cdot)$ is the projection onto $\gK$. We recall the following guarantee on the expected excess loss using a standard analysis (see \cite{hazan19introduction}).

\begin{lemma}
For a $L$-Lipschitz loss function, and a gradient oracle with bias $\alpha$ and variance $\sigma^2$, projected GD over a set $\gK$ with diameter $R$, with step size $\eta = \frac{R}{\sqrt{((L + \alpha)^2 + \sigma^2) T}}$ achieves
\begin{align*}
    \textstyle \E \sq{\gL\prn{\frac{1}{T} \sum_{i=1}^T w_i}} - \gL(w^*)
    &\textstyle ~\le~ \frac{RL}{\sqrt{T}} \cdot \sqrt{1 + \frac{2\alpha}{L} + \frac{\alpha^2 + \sigma^2}{L^2}} + \alpha R.
\end{align*}
In particular, if $\alpha = 0$, we get
\begin{align*}
    \textstyle \E \sq{\gL\prn{\frac{1}{T} \sum_{i=1}^T w_i}} - \gL(w^*)
    &\textstyle ~\le~ \frac{RL}{\sqrt{T}} \cdot \sqrt{1 + \frac{\sigma^2}{L^2}}.
\end{align*}
\end{lemma}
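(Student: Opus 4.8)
The plan is to run the textbook analysis of projected (sub)gradient descent for a convex objective, inserting the extra bookkeeping that separates the oracle's bias from its variance. I would write $w^* := \argmin_{w \in \gK} \gL(w)$ (which lies in $\gK$), decompose $g(w_t) = \nabla\gL(w_t) + \zeta(w_t)$ with $\|\E[\zeta(w_t)\mid w_t]\| \le \alpha$ and $\E[\|\zeta(w_t) - \E[\zeta(w_t)\mid w_t]\|^2 \mid w_t] \le \sigma^2$, and recall the iteration $w_{t+1} = \Pi_\gK(w_t - \eta g(w_t))$ with $w_1 \in \gK$; all expectations below are over the oracle randomness, using the tower property freely.

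First I would write the one-step inequality: since Euclidean projection onto the convex set $\gK$ is non-expansive and $w^* \in \gK$,
\[
\|w_{t+1} - w^*\|^2 ~\le~ \|w_t - w^*\|^2 - 2\eta \langle g(w_t), w_t - w^*\rangle + \eta^2 \|g(w_t)\|^2,
\]
so that $\langle g(w_t), w_t - w^*\rangle \le \tfrac{1}{2\eta}\big(\|w_t - w^*\|^2 - \|w_{t+1} - w^*\|^2\big) + \tfrac{\eta}{2}\|g(w_t)\|^2$. Next, by convexity of $\gL$, $\gL(w_t) - \gL(w^*) \le \langle \nabla\gL(w_t), w_t - w^*\rangle = \langle g(w_t), w_t - w^*\rangle - \langle \zeta(w_t), w_t - w^*\rangle$; taking expectations, conditioning on $w_t$ in the last term, and using Cauchy--Schwarz with $\mathrm{diam}(\gK) \le R$ gives $|\E\langle \zeta(w_t), w_t - w^*\rangle| \le \alpha R$, hence $\E[\gL(w_t)] - \gL(w^*) \le \E\langle g(w_t), w_t - w^*\rangle + \alpha R$. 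For the quadratic term, conditioning on $w_t$ and using that $\zeta(w_t) - \E[\zeta(w_t)\mid w_t]$ is mean-zero splits the second moment as $\E[\|g(w_t)\|^2\mid w_t] = \|\nabla\gL(w_t) + \E[\zeta(w_t)\mid w_t]\|^2 + \E[\|\zeta(w_t) - \E[\zeta(w_t)\mid w_t]\|^2\mid w_t] \le (L+\alpha)^2 + \sigma^2$, using $\|\nabla\gL(w_t)\| \le L$.

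Then I would sum over $t = 1, \dots, T$: the distance terms telescope to at most $R^2/(2\eta)$ since $\|w_1 - w^*\| \le R$, so
\[
\sum_{t=1}^T\big(\E[\gL(w_t)] - \gL(w^*)\big) ~\le~ \frac{R^2}{2\eta} + \frac{\eta T}{2}\big((L+\alpha)^2 + \sigma^2\big) + \alpha R T .
\]
Dividing by $T$ and applying Jensen's inequality to pull the average inside $\gL$ gives $\E[\gL(\tfrac1T\sum_t w_t)] - \gL(w^*) \le \frac{R^2}{2\eta T} + \frac{\eta}{2}((L+\alpha)^2+\sigma^2) + \alpha R$; substituting $\eta = R/\sqrt{((L+\alpha)^2+\sigma^2)T}$ balances the first two terms and yields $\frac{R\sqrt{(L+\alpha)^2+\sigma^2}}{\sqrt T} + \alpha R$. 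Pulling $L$ out of the square root rewrites this as $\frac{RL}{\sqrt T}\sqrt{1 + \tfrac{2\alpha}{L} + \tfrac{\alpha^2+\sigma^2}{L^2}} + \alpha R$, and $\alpha = 0$ gives the stated special case.

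There is no real obstacle here — this is a standard argument (cf. \cite{hazan19introduction}). The only two points that need care are: (i) peeling off the bias contribution $\langle \zeta(w_t), w_t - w^*\rangle$ with the correct sign and bounding it by $\alpha R$ through the diameter, rather than trying to fold it into the Lipschitz constant; and (ii) performing the bias--variance decomposition of $\E[\|g(w_t)\|^2\mid w_t]$ exactly, which is what produces the clean $(L+\alpha)^2 + \sigma^2$ factor and hence the stated step size (a cruder triangle-inequality bound would give only the weaker $(L+\sqrt{\alpha^2+\sigma^2})^2$).
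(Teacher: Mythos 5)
Your proposal is correct and is exactly the standard projected subgradient analysis (telescoping distances, bias term peeled off via Cauchy--Schwarz and the diameter, conditional bias--variance split of $\E\|g(w_t)\|^2$) that the paper itself invokes by citation to \cite{hazan19introduction} without writing out details; your constants and step size reproduce the stated bound exactly. No gaps.
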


The dependence on the bias is essentially tight, for {\em any} first order method. Consider the loss function $\gL(w) = \frac12 \alpha w$ defined over the domain $\gK = [0, R]$. However, if the gradient oracle is allowed to have a bias of magnitude $\alpha$, it is impossible using any first order algorithm to distinguish between the gradients of $\gL$ and that of $\gL'(w) := -\frac12 \alpha w$. Note that both $\gL$ and $\gL'$ are $L$-Lipschitz for $L \ge \frac12 \alpha$. While $\gL$ is minimized at $w=0$, $\gL'$ is minimized at $w = R$. For any $w \in \gK$, $\min\{\gL(w) - \gL(0), \gL'(w) - \gL'(R)\} \ge \frac14 \alpha R$.

Thus, with access to gradients with bias $\alpha$, it is impossible for any first order method to achieve an excess loss that is smaller than $\Omega(\alpha R)$, whereas, with access to gradients with zero bias and variance $\sigma^2$, it is possible to achieve an arbitrarily small excess loss using sufficient number of steps of SGD.

\section{Noisy Label Loss Bound from Discretization}\label{apx:discrete-loss-bound}
Here we prove \Cref{lem:discretization-loss}, restated below for convenience.

\discretizationLoss*
\begin{proof}
Given any unbiased mechanism $\gM$, we construct a mechanism $\gM_u$ supported on $\gYhat' := \{ L, L+\Delta, \ldots, U - \Delta, U)$ by applying ``unbiased randomized rounding'' $\URR_{\gYhat'}$ to the output of $\gM$, i.e., $\gM_u(y) := \URR_{\gYhat'}(\gM(y))$.

For any $\hy \in [L, U]$ such that $\hy_1 \le \hy \le \hy_2$ for $\hy_1, \hy_2$ being consecutive elements of $\gYhat'$, $\URR_{\gYhat'}(y)$ returns $\hy'$ drawn as $\hy_1$ with probability $\frac{\hy_2 - \hy}{\hy_2 - \hy_1}$ and $\hy_2$ with probability $\frac{\hy - \hy_1}{\hy_2 - \hy_1}$. Observe that $\E[\URR_{\gYhat'}(\hy)] = \hy$ and $|\URR_{\gYhat'}(\hy) - \hy| \le \Delta$ with probability $1$.

Thus, we have that $\E[\gM_u(y)] = \E[\URR_{\gYhat'}(\gM(y))] = \E[\gM(y)] = y$, and hence $\gM_u$ is also unbiased. Moreover,
\begin{align*}
\NLL(\gM_u; P) &~=~ \E_{\substack{y \sim \gP\\ \hy' \sim \gM_u(y)}}[g(\hy', y)] ~=~ \E_{\substack{y \sim \gP\\ \hy \sim \gM(y)\\ \hy' \sim \URR_{\gYhat'}(\hy)}}[g(\hy', y)]\\
&~\le~ \E_{\substack{y \sim \gP\\ \hy \sim \gM(y)\\ \hy' \sim \URR_{\gYhat'}(\hy)}}[g(\hy, y) + K |\hy' - \hy|]
~=~ \NLL(\gM; P) + K \Delta,
\end{align*}
where the last line follows from $K$-Lipschitzness of $g$. Since $\gM_\Delta$ is the optimal unbiased randomizer with output labels $\gYhat'$, we have that $\NLL(\gM_\Delta ; \gP) \le \NLL(\gM_u; \gP) \le \NLL(\gM ; \gP) + K \Delta$.
\end{proof}

\section{DP Mechanisms Definitions}\label{apx:dp_mech_defs}

In this section, we recall the definition of various DP notions that we use throughout the paper.

\begin{definition}[Global Sensitivity]
Let $f$ be a function taking as input a dataset and returning as output a vector in $\mathbb{R}^d$. Then, the \emph{global sensitivity} $\Delta(f)$ of $f$ is defined as the maximum, over all pairs ($X, X')$ of adjacent datasets, of $||f(X) - f(X')||_1$. 
\end{definition}

The (discrete) Laplace distribution with scale parameter $b > 0$ is denoted by $\DLap(b)$. Its probability mass function is given by $p(y) \propto \exp(- |y| / b)$ for any $y \in \mathbb{Z}$.

\begin{definition}[Discrete Laplace Mechanism] \label{def:dlap}
Let $f$ be a function taking as input a dataset $X$ and returning as output a vector in $\mathbb{Z}^d$.
The \emph{discrete Laplace mechanism} applied to $f$ on input $X$ returns $f(X) + (Y_1, \dots, Y_d)$ where each $Y_i$ is sampled i.i.d. from $\DLap(\Delta(f)/\eps)$. The output of the mechanism is $\eps$-\DP.
\end{definition}

Next, recall that the (continuous) Laplace distribution $\Lap(b)$ with scale parameter $b > 0$ has probability density function given by $h(y) \propto \exp(-|y|/b)$ for any $y \in \mathbb{R}$. 
\begin{definition}[Continuous Laplace Mechanism, \cite{DworkMNS06}]\label{def:lap}
Let $f$ be a function taking as input a dataset $X$ and returning as output a vector in $\mathbb{R}^d$.
The \emph{continuous Laplace mechanism} applied to $f$ on input $X$ returns $f(X)+(Y_1, \dots, Y_d)$ where each $Y_i$ is sampled i.i.d. from $\Lap(\Delta(f)/\eps)$. The output of the mechanism is $\eps$-\DP.
\end{definition}

We next define the discrete and continuous versions of the staircase mechanism \citep{geng2014optimal}.

\begin{definition}[Discrete Staircase Distribution]
\label{def:dstaircase}
Fix $\Delta \geq 2$. The \emph{discrete staircase distribution} is parameterized by an integer $1 \le r \le \Delta$ and has probability mass function given by: 
\begin{align}
\label{eq:discrete_staircase_pdf}
p_{r}(i) =
\begin{cases}
a(r) & \text{ for } 0 \le i < r, \\
e^{-\eps} a(r) & \text{ for } r \le i < \Delta \\ 
e^{-k \eps}  p_r(i - k \Delta) & \text{ for } k \Delta \le i < (k+1)\Delta \text{ and } k \in \mathbb{N}\\ 
p_r(-i) \text{ for } i < 0,
\end{cases}
\end{align}
where
\begin{equation*}
    a(r) = := \frac{1-b}{2r + 2b(\Delta-r) - (1-b)}.
\end{equation*}

Let $f$ be a function taking as input a dataset $X$ and returning as output a scalar in $\mathbb{Z}$. The \emph{discrete staircase mechanism} applied to $f$ on input $X$ returns $f(X) + Y$ where $Y$ is sampled from the discrete staircase distribution given in~(\ref{eq:discrete_staircase_pdf}).
\end{definition}

\begin{definition}[Continuous Staircase Distribution]\label{def:staircase}
The \emph{continuous staircase distribution} is parameterized by $\gamma \in (0,1)$ and has probability density function given by:
\begin{align}
\label{eq:continuous_staircase_pdf}
h_{\gamma}(x) =
\begin{cases}
a(\gamma) & \text{ for } x \in [0, \gamma \Delta) \\
e^{-\eps} a(\gamma) & \text{ for } x \in [\gamma \Delta, \Delta) \\ 
e^{-k \eps} h_{\gamma}(x-k\Delta) & \text{ for } x \in [k \Delta, (k+1)\Delta) \text{ and } k \in \mathbb{N}\\ 
h_{\gamma}(-x) \text{ for } x < 0,
\end{cases}
\end{align}
where
\begin{equation*}
    a(\gamma) = := \frac{1-e^{-\eps}}{2 \Delta(\gamma + e^{-\eps}(1-\gamma)}.
\end{equation*}

Let $f$ be a function taking as input a dataset $X$ and returning as output a scalar in $\mathbb{R}$. The \emph{continuous staircase mechanism} applied to $f$ on input $X$ returns $f(X)+Y$ where $Y$ is sampled from the continuous staircase distribution given in (\ref{eq:continuous_staircase_pdf}).
\end{definition}

\begin{definition}[Randomized Response, \cite{warner1965randomized}]
Let $\eps \geq 0$, and $q$ be a positive integer. The \emph{randomized response} mechanism with parameters $\eps$ and $q$ (denoted by $\mathsf{RR}_{\eps, q}$) takes as input $y \in \{1, \dots, q\}$ and returns $\hy \sim \hY$, where the random variable $\hY$  is distributed as:
\begin{align}
\label{eq:rr-labeldp}
\Pr[\hy = \hat{y}] =
\begin{cases}
\frac{e^{\eps}}{e^{\eps} + q - 1} & \text{ for } \hat{y} = y \\
\frac{1}{e^{\eps} + q - 1} & \text{ otherwise}.
\end{cases}
\end{align}
The output of $\mathsf{RR}_{\eps, q}$ is $\eps$-\DP.
\end{definition}

\begin{definition}[Randomized Response on Bins, \cite{GhaziKKLMVZ2022}]\label{def:rr-on-bins}
Let $\eps > 0$, for map $\Phi : \gY \to \gYhat$, $\RRonBins^{\Phi}_{\eps}$ is defined as the mechanism that on input $y$ samples $\hy \sim \hY$, where the random variable $\hY$ is distributed as
\[
\Pr[\hY = \hy] ~=~ \begin{cases}
    \frac{e^\eps}{e^\eps + |\gYhat| - 1} & \text{if } \hy = \Phi(y)\\
    \frac{1}{e^\eps + |\gYhat| - 1} & \text{otherwise}.\\
\end{cases}
\]
\end{definition}

\begin{definition}[Debiased Randomized Response]\label{def:debiased-rr}
For $\eps > 0$, the \emph{$\eps$-debiased randomized response} ($\eps$-$\dbRR$)  operates by performing randomized response on $(\Phi(y))_{y \in \gY}$, where 
\[
\textstyle \Phi(y) = \prn{(e^\eps + |\gY| - 1) y - \sum_{y' \in \gY} y'} / (e^\eps - 1), 
\]
namely, the mechanism on input $y$ samples $\hy \sim \hY$, where the random variable $\hY$ is distributed as
\[
\Pr[\hY = \hy] ~=~ \begin{cases}
	\frac{e^\eps}{e^\eps + |\gY| - 1} & \text{if } \hy = \Phi(y)\\
	\frac{1}{e^\eps + |\gY| - 1} & \text{otherwise}.\\
\end{cases}
\]
\end{definition}

Finally, we prove \Cref{prop:dbrr-feasibility}, restated below for convenience.

\dbrrfeasiblity*

\begin{proof}
Let $y_0 = \min(\gY)$ and $y_1 = \max(\gY)$. The smallest and largest values among the outputs of $\eps$-$\dbRR$ are given as
\begin{align*}
\hy_0 &~=~ \textstyle\prn{(e^\eps + |\gY| - 1) \cdot y_0 - \sum_{y \in \gY} y} / (e^\eps - 1), \\
\hy_1 &~=~ \textstyle\prn{(e^\eps + |\gY| - 1) \cdot y_1 - \sum_{y \in \gY} y} / (e^\eps - 1).
\end{align*}
Consider the mechanism $\gM$ such that for any input $y \in \gY$ it holds that
\begin{align*}
\gM(y) &~=~ \begin{cases}
	\hy_0 & \quad \text{w.p. } \frac{\hy_1 - y}{\hy_1 - \hy_0}, \\
	\hy_1 & \quad \text{w.p. } \frac{y - \hy_0}{\hy_1 - \hy_0}.
\end{cases}
\end{align*}
Note $\gM$ is unbiased, namely $\E[\gM(y)] = y$ for all $y \in \gY$.  We show
\begin{align*}
\frac{\max_y \Pr[\gM(y) = \hy_0]}{\min_y \Pr[\gM(y) = \hy_0]}
&~=~ \frac{\Pr[\gM(y_0) = \hy_0]}{\Pr[\gM(y_1) = \hy_0]}\\
&~=~ \frac{(e^\eps + |\gY| - 1) \cdot y_1 - \sum_{y \in \gY} y - (e^\eps - 1) y_0}{(e^\eps + |\gY| - 1) \cdot y_1 - \sum_{y \in \gY} y - (e^\eps - 1) y_1}\\
&~=~ \frac{\sum_{y \ne y_0} (y_1 - y) + e^{\eps} (y_1 - y_0)}{\sum_{y \ne y_0} (y_1 - y) + (y_1 - y_0)} ~~\le~~ e^\eps,
\end{align*}
where the last step follows because $\sum_{y \ne y_0} (y_1 - y) \ge 0$ and for $a, b, c > 0$ it holds that $(a + b) / (a + c) \le b/c$. Similarly, it holds that $\max_y \Pr[\gM(y) = \hy_1] / \min_y \Pr[\gM(y) = \hy_1] \leq e^\eps$.
\end{proof}

\section{Additional Details of Experiments}
\label{app:exp-details}

All our experiments were performed using NVidia P100 GPUs.

\subsection{Hyperparameter Details}

\paragraph{Criteo Sponsored Search Conversion.}
We use a neural network that takes as input concatenation of floating point features, as well as categorical features using an embedding table of dimension four for each. The neural network had two hidden layers of dimensions $64$ and $32$ respectively. The training was performed on a random $80\%$ of the dataset using \texttt{RMSProp} algorithm using the squared loss objective, with learning rate of $10^{-4}$, $\ell_2$-regularization of $10^{-4}$, batch size of $1,024$ for $50$ epochs. The remaining $20\%$ of the dataset was used to report the test loss.

This architecture choice was chosen by performing a hyperparameter search over various choices, and this choice seemed to give the best results for all the randomizers considered.

\paragraph{US Census.}
We use a neural network that takes as input concatenation of floating point features, as well as categorical features using an embedding table of dimension eight for each. The neural network had two hidden layers of dimensions $128$ and $64$ respectively. The training was performed on a random $80\%$ of the dataset using \texttt{RMSProp} algorithm using the squared loss objective, with $\ell_2$-regularization of $10^{-4}$, batch size of $8,192$. A varying learning rate in $\{ 10^{-2}, 10^{-3}, 10^{-4} \}$ and a varying number of epochs in $\{ 50, 200 \}$ were used, and the best test loss was reported for training with each randomizer. The remaining $20\%$ of the dataset was used to report the test loss.

\subsection{Early Stopping}
In training on the AppAds dataset, we found that the training would often be unstable, with the training loss blowing up after several steps, especially for smaller values of $\eps$, which corresponded to faster blow up. We used early stopping, keeping track of intermediate states, to select the best model that minimizes the training loss, and used this to report the test loss. 

\subsection{Complexity of Computing the Mechanisms}

Even though the wall clock time for computing the optimal unbiased randomizer using an LP  solver is significantly larger than that of computing the optimal $\RRonBins$ randomizer (for which there is a highly efficient dynamic programming algorithm~\cite{GhaziKKLMVZ2022}), this is still orders of magnitude smaller than that of ML model training. We note that while the noisy label loss decreases as $\Delta$ in \Cref{alg:output-labels-heuristic} is made smaller, this causes the computational cost of solving the LP to increase. This trade-off is demonstrated in the following table, which shows the running time of the LP for the unbiased randomizer, the noisy label loss and the final test loss, for different mesh sizes (parameter $n$ in \Cref{alg:output-labels-heuristic}) for $\varepsilon = 1$ on the US Census dataset we study (prediction of number of weeks worked in $\{1, \ldots, 52\}$).

\begin{center}
\begin{tabular}{|c|c|c|c|c|}
\toprule
{\bf Mechanism} & {\bf Mesh size} & {\bf \shortstack{Computing mechanism\\ wall-clock time (secs)}} & {\bf Noisy label loss} & {\bf Test loss}\\
\midrule
$\RRonBins$ & n/a & 0.154 & 79.71 & 172.44\\
Opt. Unbiased & 52 & 2.38 & 1288.21 & 134.44 \\
Opt. Unbiased & 416 & 17.1 & 1275.22 & 134.43 \\
Opt. Unbiased & 1664 & 161 & 1274.71 & 134.43 \\
\bottomrule
\end{tabular}
\end{center}

The test loss is quite similar for various mesh sizes, even though the noisy label loss does improve slightly with finer discretization. This suggests that the unbiasedness was key to the improvements over $\RRonBins$ and the discretization of the output set does not hurt performance as much.

The computation time increases considerably when the number of input labels is large, e.g., in the Criteo Search Sponsored Conversion Logs dataset, where there were $401$ input labels $\{0, \ldots, 400\}$.
\end{document}